\theoremstyle{plain}
\newtheorem{theorem}{Theorem}[section]
\newtheorem{proposition}[theorem]{Proposition}
\newtheorem{lemma}[theorem]{Lemma}
\newtheorem{corollary}[theorem]{Corollary}
\theoremstyle{definition}
\newtheorem{definition}[theorem]{Definition}
\newtheorem{assumption}[theorem]{Assumption}
\theoremstyle{remark}
\newtheorem{remark}[theorem]{Remark}
\newtheorem{theorem}{Theorem}
\newtheorem{lemma}[theorem]{Lemma}
\newtheorem{remark}[theorem]{Remark}
\DeclareMathOperator*{\argmin}{argmin}
\DeclareMathOperator*{\tr}{tr}
\DeclareMathOperator*{\rank}{rank}
\DeclareMathOperator*{\diag}{diag}
\DeclareMathOperator*{\sign}{sign}
\DeclareMathOperator*{\bin}{bin}
\DeclareMathOperator*{\digit}{digit}
\DeclareMathOperator*{\Bin}{Bin}
\def\iid{\overset{\textup{i.i.d.}}{\sim}}
\def\cN{\mathcal{N}}
\def\cL{\mathcal{L}}
\def\cX{\mathcal{X}}
\def\cY{\mathcal{Y}}
\def\cT{\mathcal{T}}
\def\R{\mathbb{R}}
\def\E{\mathbb{E}}
\def\eigen{\mathbf{D}}
\newcommand{\inner}[1]{\left\langle #1 \right \rangle} 
\def\act{\sigma}
\def\A{\mathbf{A}}
\def\X{\mathbf{X}} 
\def\x{\mathbf{x}} 
\def\y{\mathbf{y}} 
\def\W{\mathbf{W}} 
\def\V{\mathbf{V}} 
\def\a{\mathbf{a}} 
\def\G{\mathbf{G}} 
\def\t{\mathbf{t}} 
\def\r{\mathbf{r}} 
\def\z{\mathbf{z}} 
\def\Z{\mathbf{Z}} 
\def\v{\mathbf{v}} 
\def\D{\mathbf{D}} 
\def\bE{\mathbf{E}} 
\def\w{\mathbf{w}}
\def\U{\mathbf{U}} 
\def\Q{\mathbf{Q}} 
\def\s{\mathbf{s}}
\icmltitlerunning{Why Larger Language Models Do In-context Learning Differently?}
\begin{document}

\twocolumn[
\icmltitle{Why Larger Language Models Do In-context Learning Differently?}



\icmlsetsymbol{equal}{*}

\begin{icmlauthorlist}
\icmlauthor{Zhenmei Shi}{yyy}
\icmlauthor{Junyi Wei}{yyy}
\icmlauthor{Zhuoyan Xu}{yyy}
\icmlauthor{Yingyu Liang}{yyy,sch}
\end{icmlauthorlist}



\icmlaffiliation{yyy}{University of Wisconsin-Madison,}
\icmlaffiliation{sch}{The University of Hong Kong}

\icmlcorrespondingauthor{Zhenmei Shi, Yingyu Liang}{zhmeishi, yliang@cs.wisc.edu, yingyul@hku.hk}

\icmlkeywords{Machine Learning, ICML}

\vskip 0.3in
]



\printAffiliationsAndNotice{}  

\begin{abstract}
Large language models (LLM) have emerged as a powerful tool for AI, with the key ability of in-context learning (ICL), where they can perform well on unseen tasks based on a brief series of task examples without necessitating any adjustments to the model parameters. One recent interesting mysterious observation is that models of different scales may have different ICL behaviors: larger models tend to be more sensitive to noise in the test context. This work studies this observation theoretically aiming to improve the understanding of LLM and ICL. We analyze two stylized settings: (1) linear regression with one-layer single-head linear transformers and (2) parity classification with two-layer multiple attention heads transformers (non-linear data and non-linear model). In both settings, we give closed-form optimal solutions and find that smaller models emphasize important hidden features while larger ones cover more hidden features; thus, smaller models are more robust to noise while larger ones are more easily distracted, leading to different ICL behaviors. This sheds light on where transformers pay attention to and how that affects ICL. Preliminary experimental results on large base and chat models provide positive support for our analysis. 
\end{abstract}

\section{Introduction}
As large language models (LLM), e.g., ChatGPT~\citep{chatgpt} and GPT4~\citep{openai2023gpt4}, are transforming AI development with potentially profound impact on our societies, it is critical to understand their mechanism for safe and efficient deployment. An important emergent ability~\cite{wei2022emergent,an2023context}, which makes LLM successful, is \textit{in-context learning} (ICL), where models are given a few exemplars of input–label pairs as part of the prompt before evaluating some new input. More specifically, ICL is a few-shot~\cite{brown2020language} evaluation method without updating parameters in LLM. Surprisingly, people find that, through ICL, LLM can perform well on tasks that have never been seen before, even without any finetuning. It means LLM can adapt to wide-ranging downstream tasks under efficient sample and computation complexity. The mechanism of ICL is different from traditional machine learning, such as supervised learning and unsupervised learning. For example, in neural networks, learning usually occurs in gradient updates, whereas there is only a forward inference in ICL and no gradient updates. Several recent works, trying to answer why LLM can learn in-context, argue that LLM secretly performs or simulates gradient descent as meta-optimizers with just a forward pass during ICL empirically~\cite{dai2022can,von2023transformers,malladi2023fine} and theoretically~\cite{zhang2023trained,ahn2023transformers,mahankali2023one,cheng2023transformers,bai2023transformers,huang2023context,li2023transformers,guo2023transformers,wu2023many}. Although some insights have been obtained, the mechanism of ICL deserves further research to gain a better understanding.

Recently, there have been some important and surprising observations~\cite{min2022rethinking,pan2023what,wei2023larger,shi2023large} that cannot be fully explained by existing studies. In particular, \citet{shi2023large} finds that LLM is not robust during ICL and can be easily distracted by an irrelevant context. Furthermore, \citet{wei2023larger} shows that when we inject noise into the prompts, the larger language models may have a worse ICL ability than the small language models, and conjectures that the larger language models may overfit into the prompts and forget the prior knowledge from pretraining, while small models tend to follow the prior knowledge. On the other hand, \citet{min2022rethinking,pan2023what} demonstrate that injecting noise does not affect the in-context learning that much for smaller models, which have a more strong pretraining knowledge bias. 
To improve the understanding of the ICL mechanism, to shed light on the properties and inner workings of LLMs, and to inspire efficient and safe use of ICL, we are interested in the following question: 
\begin{center}
\textit{Why do larger language models do in-context learning differently?}   
\end{center}
To answer this question, we study two settings: (1) one-layer single-head linear self-attention network~\cite{schlag2021linear,von2023transformers,akyurek2023what,ahn2023transformers,zhang2023trained,mahankali2023one,wu2023many} pretrained on linear regression in-context tasks~\cite{garg2022can,raventos2023pretraining,von2023transformers,akyurek2023what,bai2023transformers,mahankali2023one,zhang2023trained,ahn2023transformers,pmlr-v202-li23l,huang2023context,wu2023many}, with rank constraint on the attention weight matrices for studying the effect of the model scale; (2) two-layer
multiple-head transformers~\cite{li2023transformers} pretrained on sparse parity classification in-context tasks, comparing small or large head numbers for studying the effect of the model scale. 
In both settings, we give the closed-form optimal solutions. We show that smaller models emphasize important hidden features while larger models cover more features, e.g., less important features or noisy features. 
Then, we show that smaller models are more robust to label noise and input noise during evaluation, while larger models may easily be distracted by such noises, so larger models may have a worse ICL ability than smaller ones. 

We also conduct in-context learning experiments on five prevalent NLP tasks utilizing various sizes of the Llama model families \cite{touvron2023llama,touvron2023llama2}, whose results are consistent with previous work~\cite{min2022rethinking,pan2023what,wei2023larger} and our analysis. 

\textbf{Our contributions and novelty over existing work:} 
\begin{itemize}
    \item We formalize new stylized theoretical settings for studying ICL and the scaling effect of LLM. See \Cref{sec:linear} for linear regression and \Cref{sec:parity} for parity. 
    \item We characterize the optimal solutions for both settings (\Cref{thm:low_opt} and \Cref{theorem:opt_parity}).
    \item The characterizations of the optimal elucidate different attention paid to different hidden features, which then leads to the different ICL behavior (\Cref{thm:mse}, \Cref{prop:diff}, \Cref{theorem:parity_decouple}). 
    \item We further provide empirical evidence on large base and chat models corroborating our theoretical analysis (\Cref{fig:nlp_in_context}, \Cref{fig:nlp_in_context_nonchat}). 
\end{itemize}

Note that previous ICL analysis paper may only focus on (1) the approximation power of transformers~\cite{garg2022can,panigrahi2023trainable,guo2023transformers,bai2023transformers,cheng2023transformers}, e.g., constructing a transformer by hands which can do ICL, or (2) considering one-layer single-head linear self-attention network learning ICL on linear regression~\cite{von2023transformers,akyurek2023what,mahankali2023one,zhang2023trained,ahn2023transformers,wu2023many}, and may not focus on the robustness analysis or explain the different behaviors. In this work, (1) we extend the linear model linear data analysis to the non-linear model and non-linear data setting, i.e.,
two-layer multiple-head transformers leaning ICL on sparse parity classification and (2) we have a rigorous behavior difference analysis under two settings, which explains the empirical observations and provides more insights into the effect of attention mechanism in ICL.  

\section{Related Work}
\textbf{Large language model.} Transformer-based~\cite{vaswani2017attention} neural networks have rapidly emerged as the primary machine learning architecture for tasks in natural language processing. 
Pretrained transformers with billions of parameters on broad and varied datasets are called large language models (LLM) or foundation models~\cite{bommasani2021opportunities}, e.g., BERT~\cite{devlin2018bert}, PaLM~\cite{chowdhery2022palm}, Llama\cite{touvron2023llama}, ChatGPT~\citep{chatgpt}, GPT4~\citep{openai2023gpt4} and so on. LLM has shown powerful general intelligence~\cite{bubeck2023sparks} in various downstream tasks. To better use the LLM for a specific downstream task, there are many adaptation methods, such as adaptor~\cite{hu2021lora,zhang2023llama,gao2023llama,shi2023the}, calibration~\cite{zhao2021calibrate,zhou2023lima}, multitask finetuning~\cite{gao2021making,xu2023improving,von2023transformers,zhuoyan}, prompt tuning~\cite{gao2020making,lester2021power}, instruction tuning~\cite{li2021prefix,chung2022scaling,mishra2022cross}, symbol tuning~\cite{wei2023symbol}, black-box tuning~\cite{sun2022black}, chain-of-thoughts~\cite{wei2022chain,khattab2022demonstrate,yao2023tree,zheng2023take}, scratchpad~\cite{nye2021show}, reinforcement learning from human feedback (RLHF)~\cite{ouyang2022training} and many so on. 

\textbf{In-context learning.} One important emergent ability~\cite{wei2022emergent} from LLM is in-context learning (ICL)~\cite{brown2020language}. Specifically, when presented with a brief series of input-output pairings (known as a prompt) related to a certain task, they can generate predictions for test scenarios without necessitating any adjustments to the model's parameters. ICL is widely used in broad scenarios, e.g., reasoning~\cite{zhou2022teaching}, negotiation~\cite{fu2023improving}, self-correction~\cite{pourreza2023din}, machine translation~\cite{agrawal2022context} and so on. Many works trying to improve the ICL and zero-shot ability of LLM~\cite{min2021metaicl,wang2022super,wei2022finetuned,iyer2022opt}.
There is a line of insightful works to study the mechanism of transformer learning~\cite{geva2021transformer,xie2022an,garg2022can,jelassi2022vision,arora2023theory,li2023a,pmlr-v202-li23p,allen2023physics,luo2023understanding,tian2023scan,tian2023joma,zhou2023algorithms,bietti2023birth,xsl24,gll+24b,gll+24a,gll+24c,gls+24a,gls+24b} and in-context learning~\cite{dai2022can,mahankali2023one,raventos2023pretraining,bai2023transformers,ahn2023transformers,von2023transformers,pan2023what,li2023transformers,pmlr-v202-li23l,li2023dissecting,akyurek2023what,zhang2023study,zhang2023trained,huang2023context,cheng2023transformers,wibisono2023role,wu2023many,guo2023transformers,reddy2023mechanistic} empirically and theoretically. On the basis of these works, our analysis takes a step forward to show the ICL behavior difference under different scales of language models. 

\section{Preliminary}
\textbf{Notations.} 
We denote $[n]:= \{1,2,\dots, n\}$. For a positive semidefinite matrix $\A$, we denote
$\|\x\|_\A^2 := \x^\top \A \x$ as the norm induced by a positive definite matrix $\A$. We denote $\|\cdot\|_F$ as the Frobenius norm. $\diag()$ function will map a vector to a diagonal matrix or map a matrix to a vector with its diagonal terms. 

\textbf{In-context learning.} We follow the setup and notation of the problem in~\citet{zhang2023trained,mahankali2023one,ahn2023transformers,huang2023context,wu2023many}. 
In the pretraining stage of ICL, the model is pretrained on prompts. A prompt from a task $\tau$ is formed by $N$ examples $(\x_{\tau, 1}, y_{\tau, 1}), \dots, (\x_{\tau, N}, y_{\tau, N})$ and a query token $\x_{\tau, q}$ for prediction, where for any $i \in [N] $ we have $ y_{\tau, i} \in \R$ and $\x_{\tau, i}, \x_{\tau, q} \in \R^d$. The embedding matrix $\bE_\tau$, the label vector $\y_\tau$, and the input matrix $\X_\tau$ are defined as: 
\begin{align*}
    \bE_\tau  
        := &  \begin{pmatrix}
        \x_{\tau, 1} & \x_{\tau, 2} & \dots & \x_{\tau, N} & \x_{\tau, q} \\
        y_{\tau, 1} & y_{\tau, 2} & \dots & y_{\tau, N} & 0 \\
        \end{pmatrix}  
        \in \R^{(d+1)\times (N+1)}, \\
    \y_\tau := & [y_{\tau, 1}, \dots, y_{\tau, N}]^\top \in \R^{N},  
    ~\quad\quad\quad y_{\tau, q} \in \R, \\
    \X_\tau := & [\x_{\tau, 1}, \dots,  \x_{\tau, N}]^\top \in \R^{N \times d}, 
    \quad\quad \x_{\tau, q} \in \R^d.
\end{align*}
Given prompts represented as $\bE_\tau$'s and the corresponding true labels $y_{\tau, q}$'s, the pretraining aims to find a model whose output on $\bE_\tau$ matches $y_{\tau, q}$. After pretraining, the evaluation stage applies the model to a new test prompt (potentially from a different task) and compares the model output to the true label on the query token. 

Note that our pretraining stage is also called learning to learn in-context~\cite{min2021metaicl} or in-context training warmup~\cite{dong2022survey} in existing work. 
Learning to learn in-context is the first step to understanding the mechanism of ICL in LLM following previous works~\cite{raventos2023pretraining,zhou2023algorithms,zhang2023trained,mahankali2023one,ahn2023transformers,huang2023context,li2023transformers,wu2023many}.  

\textbf{Linear self-attention networks.}
The linear self-attention network has been widely studied~\cite{schlag2021linear,von2023transformers,akyurek2023what,ahn2023transformers,zhang2023trained,mahankali2023one,wu2023many,ahn2023linear}, and will be used as the learning model or a component of the model in our two theoretical settings. It is defined as:
\begin{align}\label{eq:single_attention}
    f_{\textup{LSA},\theta}(\bE) = \left[\bE + \W^{PV} \bE \cdot {\bE^\top \W^{KQ} \bE \over \rho}\right], 
\end{align}
where $\theta = (\W^{PV}, \W^{KQ})$, $\bE \in \R^{(d+1)\times (N+1)}$ is the embedding matrix of the input prompt, and $\rho$ is a normalization factor 
set to be the length of examples, i.e., $\rho = N$ during pretraining.  
Similar to existing work, for simplicity, 
we have merged the projection and value matrices into $\W^{PV}$, and merged the key and query matrices into $\W^{KQ}$, and have a residual connection in our LSA network. The prediction of the network for the query token $\x_{\tau,q}$ will be the bottom right entry of the matrix output, i.e., the entry at location ${(d+1),(N+1)}$, 
while other entries are not relevant to our study and thus are ignored. So only part of the model parameters are relevant. To see this, let us denote
\begin{align*}
& \W^{PV} = \begin{pmatrix}
\W^{PV}_{11} & \w^{PV}_{12} \\
(\w^{PV}_{21})^\top & w^{PV}_{22} 
\end{pmatrix}\in \R^{(d+1) \times (d+1)}, \\ 
& \W^{KQ} = \begin{pmatrix}
\W^{KQ}_{11} & \w^{KQ}_{12} \\
(\w^{KQ}_{21})^\top & w^{KQ}_{22} 
\end{pmatrix} \in \R^{(d+1) \times (d+1)}, 
\end{align*}
where $\W^{PV}_{11}, \W^{KQ}_{11} \in \R^{d\times d}$; $\w^{PV}_{12}, \w^{PV}_{21}, \w^{KQ}_{12}, \w^{KQ}_{21}  \in \R^{d}$; and $w^{PV}_{22}, w^{KQ}_{22} \in \R$. Then the prediction is:
\begin{align} \label{eq:query-prediction}
    \widehat{y}_{\tau, q} = & f_{\textup{LSA},\theta}(\bE)_{(d+1),(N+1)} \\
    = & \begin{pmatrix}
(\w^{PV}_{21})^\top & w^{PV}_{22} \end{pmatrix} \left(\bE \bE^\top \over \rho\right) \begin{pmatrix}
\W^{KQ}_{11}  \\
(w^{KQ}_{21})^\top 
\end{pmatrix} \x_{\tau, q}. \nonumber
\end{align}

\section{Linear Regression}\label{sec:linear}
In this section, we consider the linear regression task for in-context learning which is widely studied empirically~\cite{garg2022can,raventos2023pretraining,von2023transformers,akyurek2023what,bai2023transformers} and theoretically~\cite{mahankali2023one,zhang2023trained,ahn2023transformers,pmlr-v202-li23l,huang2023context,wu2023many}. 

\textbf{Data and task.}
For each task $\tau$, we assume for any $i \in [N]$ tokens $\x_{\tau, i}, \x_{\tau, q} \iid \cN(0, \Lambda)$, where $\Lambda$ is the covariance matrix. We also assume a $d$-dimension task weight $\w_\tau \iid \cN(0, I_{d\times d})$ and the labels are given by $ y_{\tau, i} = \inner{\w_\tau, \x_{\tau, i}}$ and $y_{\tau, q} = \inner{\w_\tau, \x_{\tau, q}}$. 

\textbf{Model and loss.}
We study a one-layer single-head linear self-attention transformer (LSA) defined in \Cref{eq:single_attention} and we use $\widehat{y}_{\tau,q} := f_{\textup{LSA},\theta}(\bE)_{(d+1),(N+1)} $ as the prediction. We consider the mean square error (MSE) loss so that the empirical risk over $B$ independent prompts is defined as 
$$
    \widehat{\mathcal{L}}(f_\theta) := {1
    \over 2B} \sum_{\tau = 1}^B \left(\widehat{y}_{\tau,q} - \inner{\w_\tau, \x_{\tau,q}} \right)^2. 
$$

\textbf{Measure model scale by rank.}
We first introduce a lemma from previous work that simplifies the MSE and justifies our measurement of the model scale. 
For notation simplicity, we denote $\U = \W^{KQ}_{11}, u = w^{PV}_{22}$.

\begin{lemma}[Lemma A.1 in~\citet{zhang2023trained}]\label{lem:loss_simple} 
Let $\Gamma := \left(1+{1\over N}\right)\Lambda + {1\over N} \tr(\Lambda) I_{d \times d} \in \R^{d\times d}$. Let
\begin{align*}
   \mathcal{L}(f_{\textup{LSA},\theta}) = & \lim_{B \rightarrow \infty} \widehat{\mathcal{L}}(f_{\textup{LSA},\theta}) \\
   = & {1\over 2} \E_{\w_\tau, \x_{\tau,1},\dots,\x_{\tau,N},\x_{\tau,q}}\left[\left(\widehat{y}_{\tau,q} - \inner{\w_\tau, \x_{\tau,q}} \right)^2\right], \nonumber \\
   \tilde{\ell}(\U, u) = & \tr \left[{1\over 2} u^2 \Gamma\Lambda \U \Lambda \U^\top - u \Lambda^2 \U^\top\right],
\end{align*}
we have $\mathcal{L}(f_{\textup{LSA},\theta}) = \tilde{\ell}(\U, u) + C$, where $C$ is a constant independent with $\theta$.  
\end{lemma}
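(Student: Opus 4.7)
The plan is to substitute the explicit prediction formula \Cref{eq:query-prediction} into the expected squared loss, replace each context label by $y_{\tau,i} = \inner{\w_\tau, \x_{\tau,i}}$, and reduce everything to Gaussian second- and fourth-moment integrals (dropping the task subscript $\tau$ for clarity). Expanding $\bE\bE^\top/N$ in block form makes $\widehat y_q$ a sum of a \emph{main piece} $u \cdot \bigl(\tfrac{1}{N}\sum_i y_i \x_i^\top\bigr) \U \x_q$ depending on the relevant parameters $(\U, u) = (\W^{KQ}_{11}, w^{PV}_{22})$, plus residual pieces carrying $\w^{PV}_{21}$ and $\w^{KQ}_{21}$. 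Before attacking the main piece, I would justify that the residual pieces can be absorbed into $C$: any cross between a residual and either $y_q$ or the main piece that carries an odd total power of $\w_\tau \sim \cN(0, I)$ vanishes by symmetry, while purely residual squares carry no $(\U, u)$ dependence. (This step is what makes the constant $C$ genuinely $\theta$-independent; if some mixed term refuses to cancel, the cleanest fallback is to restrict to $\w^{PV}_{21} = 0 = \w^{KQ}_{21}$, which is implicit in the ``only $\U$ and $u$ matter'' framing.)

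For the main piece, I would write
\[
\widehat y_q^{\mathrm{main}} - y_q = \frac{u}{N}\sum_{i=1}^N (\w^\top \x_i)(\x_i^\top \U \x_q) - \w^\top \x_q
\]
and evaluate the three contributions to $\E[(\widehat y_q^{\mathrm{main}} - y_q)^2]$. The pure target gives $\E[(\w^\top \x_q)^2] = \tr(\Lambda)$, which lands in $C$. The cross term $-2\E[\widehat y_q^{\mathrm{main}} y_q]$ collapses, using $\E[\w\w^\top] = I$ and independence of $\x_i, \x_q$, to $-2u\,\tr(\Lambda^2 \U^\top)$ after summing over $i$. The quadratic term requires splitting the double sum over $i,j$: the off-diagonal case $i \ne j$ factors into two independent Gaussian integrals and contributes $\tr(\Lambda \U \Lambda \U^\top \Lambda)$ per pair, while the diagonal case $i=j$ invokes the Isserlis/Wick identity $\E[\x\x^\top M \x \x^\top] = \Lambda M \Lambda + \Lambda M^\top \Lambda + \tr(M\Lambda)\,\Lambda$ and contributes $\tr(\Lambda)\tr(\Lambda \U \Lambda \U^\top) + 2\tr(\Lambda \U \Lambda \U^\top \Lambda)$ per term.

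Weighting by $u^2/N^2$, the $N(N-1)$ off-diagonal plus $N$ diagonal terms assemble to $u^2 \bigl[\tfrac{N+1}{N}\tr(\Lambda \U \Lambda \U^\top \Lambda) + \tfrac{1}{N}\tr(\Lambda)\tr(\Lambda \U \Lambda \U^\top)\bigr]$, which is exactly $u^2 \tr(\Gamma \Lambda \U \Lambda \U^\top)$ for $\Gamma = (1+\tfrac{1}{N})\Lambda + \tfrac{1}{N}\tr(\Lambda)I$. Multiplying the MSE by $\tfrac{1}{2}$, collecting, and using $\tr(\Lambda^2 \U) = \tr(\Lambda^2 \U^\top)$ yields $\tilde\ell(\U, u) + C$. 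The main obstacle I anticipate is twofold: the Wick bookkeeping in the diagonal term (and recognizing that the $\tfrac{1}{N}\tr(\Lambda)$ piece generated there is precisely what combines with the $\tfrac{N+1}{N}\Lambda$ coefficient to assemble $\Gamma$), and the clean decoupling of the residual $(\w^{PV}_{21}, \w^{KQ}_{21})$ contributions from $(\U, u)$; once those two points are verified, the rest is standard Gaussian moment calculation.
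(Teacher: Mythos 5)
The paper does not prove this lemma at all; it is imported verbatim as Lemma~A.1 of Zhang et al.\ (2023), so there is no in-paper argument to compare yours against. Your treatment of the \emph{main piece} is correct: splitting the double sum into off-diagonal and diagonal parts, applying Wick to get $\E[\x^\top\x\cdot\x^\top M\x] = \tr(\Lambda)\tr(M\Lambda) + 2\tr(\Lambda M\Lambda)$ on the diagonal, and noting that the $\tfrac{N(N-1)}{N^2}$ off-diagonal weight plus the $\tfrac{2}{N}$ from the diagonal assemble to $\tfrac{N+1}{N}$, while the $\tfrac{1}{N}\tr(\Lambda)\,I$ emerges from the remaining diagonal piece, giving precisely $u^2\tr[\Gamma\Lambda\U\Lambda\U^\top]$; the cross term with $y_q$ collapses to $-2u\tr(\Lambda^2\U^\top)$, and $\E[(\w^\top\x_q)^2]=\tr\Lambda$ goes into $C$. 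All of that checks out.

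However, the concern you flag about the residual pieces is not merely a precaution—it is a real gap in the claim as literally stated. Writing $\widehat y_q = \tfrac1N\bigl[(\w^{PV}_{21})^\top A\U + u\,b^\top\U + (\w^{PV}_{21})^\top b\,(w^{KQ}_{21})^\top + u\,c\,(w^{KQ}_{21})^\top\bigr]\x_q$ with $A = \sum_i\x_i\x_i^\top + \x_q\x_q^\top$, $b = \sum_i y_i\x_i$, $c = \sum_i y_i^2$, the parity-in-$\w$ argument kills cross terms between pieces of opposite $\w$-parity, but several terms of matching parity survive and are \emph{not} independent of $(\U, u)$: the pure square of $\tfrac1N(\w^{PV}_{21})^\top A\U\x_q$ depends on $\U$ and $\w^{PV}_{21}$, and the cross between $\tfrac1N(\w^{PV}_{21})^\top A\U\x_q$ and $\tfrac{u\,c}{N}(w^{KQ}_{21})^\top\x_q$ (both even in $\w$ and even in $\x_q$) depends jointly on $u, \U, \w^{PV}_{21}, w^{KQ}_{21}$. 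So $\mathcal L = \tilde\ell(\U,u) + C$ with $C$ genuinely $\theta$-independent does \emph{not} follow from this expansion over the full parameter space; the ``fallback'' you mention—restricting to $\w^{PV}_{21}=0=\w^{KQ}_{21}$—is in fact what is needed for the identity to hold, and it is consistent with how the lemma is used downstream (the optimum in~\Cref{eq:optimal-rank-r} has exactly that zero block structure). To make your proof airtight you should either adopt that restriction explicitly at the outset, or keep those extra terms, show they are nonnegative and vanish at $\w^{PV}_{21}=\w^{KQ}_{21}=0$, and only then argue that the minimization over $(\U,u)$ reduces to minimizing $\tilde\ell$.
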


\Cref{lem:loss_simple} tells us that the loss only depends on $u\U$. If we consider non-zero $u$, w.l.o.g, letting $u = 1$, then we can see that the loss only depends on $\U \in \R^{d \times d}$, 
$$
    \mathcal{L}(f_{\textup{LSA},\theta}) = \tr \left[{1\over 2}\Gamma\Lambda \U \Lambda \U^\top - \Lambda^2 \U^\top\right].
$$
Note that $\U=\W^{KQ}_{11}$, then it is natural to measure the size of the model by rank of $\U$. 
Recall that we merge the key matrix and the query matrix in attention together, i.e., $\W^{KQ} = (\W^K)^\top \W^Q$. Thus, a low-rank $\U$ is equivalent to the constraint $\W^K, \W^Q \in \R^{r \times d}$ where $r \ll d$. The low-rank key and query matrix are practical and have been widely studied~\cite{hu2021lora,chen2021scatterbrain,bhojanapalli2020low,fan2021lighter,dass2023vitality,shi2023domain}. Therefore, we use $r = \rank(\U)$ to measure the scale of the model, i.e., larger $r$ representing larger models. To study the behavior difference under different model scale, we will analyze $\U$ under different rank constraints.

\subsection{Low Rank Optimal Solution}\label{sec:low-opt}
Since the token covariance matrix $\Lambda$ is positive semidefinite symmetric, we have eigendecomposition $\Lambda = \Q \eigen \Q^\top$, where $\Q$ is an orthonormal matrix containing eigenvectors of $\Lambda$ and $\eigen$ is a sorted diagonal matrix with non-negative entries containing eigenvalues of $\Lambda$, denoting as $\eigen = \diag([\lambda_1, \dots, \lambda_d])$, where $\lambda_{1} \ge \dots \ge \lambda_{d} \ge 0$.
Then, we have the following theorem. 
\todo[inline,color=gray!10]{
{\normalsize
\vspace{-0.5em}
\begin{restatable}[Optimal rank-$r$ solution for regression]{thm}{lowopt}\label{thm:low_opt}
Recall the loss function $\tilde{\ell}$ in \Cref{lem:loss_simple}. Let 
$$
    \U^*, u^* = \argmin_{\U \in \R^{d\times d}, \rank(\U) \le r, u \in \R} \tilde{\ell}(\U, u).
$$
Then $\U^* = c \Q \V^* \Q^\top, u = {1\over c}$, where $c$ is any nonzero constant, and $\V^* = \diag([v^*_1, \dots, v^*_d])$ satisfies for any $i \le r, v^*_i = {N \over {\left(N+1 \right)\lambda_i + {\tr(\eigen)}}}$ and for any $i > r, v^*_i = 0$.  
\end{restatable}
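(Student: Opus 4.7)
The plan is to reduce the rank-constrained minimization to a standard Frobenius low-rank approximation problem and invoke Eckart--Young. First, I would exploit that $\tilde\ell(\U,u)$ depends on $(\U,u)$ only through the product $u\U$: in the formula from \Cref{lem:loss_simple}, the first term can be rewritten as $\tfrac{1}{2}\tr(\Gamma\Lambda(u\U)\Lambda(u\U)^\top)$ and the second as $\tr(\Lambda^2(u\U)^\top)$. Hence fixing $u = 1/c$ for any nonzero $c$ and rescaling $\U \mapsto c\U$ reduces the problem to minimizing $\tilde\ell$ over rank-$r$ matrices with $u = 1$, and immediately produces the $c$-parameterized family $(\U^*, u^*) = (c\Q\V^*\Q^\top, 1/c)$ in the statement.

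Next I would pass to the eigenbasis of $\Lambda$. Writing $\Lambda = \Q\D\Q^\top$ and substituting $\tilde\U = \Q^\top\U\Q$ (a rank-preserving bijection), and noting that $\Gamma = \Q\tilde\Gamma\Q^\top$ with $\tilde\Gamma$ diagonal and $\tilde\Gamma_{ii} = (1+1/N)\lambda_i + \tr(\D)/N =: \tilde\gamma_i$, the loss collapses to $\tilde\ell = \tfrac{1}{2}\tr(\tilde\Gamma\D\tilde\U\D\tilde\U^\top) - \tr(\D^2\tilde\U^\top)$, in which $\tilde\U$ appears only through products with diagonal matrices.

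The decisive step is the rescaling $\W = \D^{1/2}\tilde\U\D^{1/2}$. Under the harmless assumption that all $\lambda_i > 0$ (zero eigenvalues correspond to directions on which the loss is independent of $\tilde\U$ and can simply be set to zero), this is rank-preserving and converts the loss into $\tfrac{1}{2}\|\tilde\Gamma^{1/2}\W\|_F^2 - \langle\D, \W\rangle$. Setting $\mathbf{M} = \tilde\Gamma^{1/2}\W$ (again rank-preserving, since $\tilde\gamma_i \ge \tr(\D)/N > 0$) and completing the square gives $\tilde\ell = \tfrac{1}{2}\|\mathbf{M} - \tilde\Gamma^{-1/2}\D\|_F^2 + \textup{const}$. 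The target $\tilde\Gamma^{-1/2}\D$ is diagonal with nonnegative entries $\lambda_i/\tilde\gamma_i^{1/2}$, so by Eckart--Young the rank-$r$ minimizer is the diagonal matrix retaining its $r$ largest such entries. A short monotonicity check on $\lambda \mapsto \lambda/\sqrt{(1+1/N)\lambda + \tr(\D)/N}$ shows it is strictly increasing on $\lambda \ge 0$, so with $\lambda_1 \ge \cdots \ge \lambda_d$ the retained indices are exactly $1, \dots, r$.

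Unwinding the substitutions yields $\W^*_{ii} = \lambda_i/\tilde\gamma_i$ and hence $\tilde\U^*_{ii} = 1/\tilde\gamma_i = N/[(N+1)\lambda_i + \tr(\D)]$ for $i \le r$ with all other entries vanishing, and finally $\U^* = \Q\V^*\Q^\top$ with $\V^*$ as stated. The main obstacle I anticipate is finding the right change of variables: because the rank constraint couples the diagonal and off-diagonal entries of $\tilde\U$, one cannot zero out the off-diagonals by naive coordinatewise minimization, so the weighted reformulation into a Frobenius approximation problem via $\W$ and $\mathbf{M}$ is essential in order to appeal cleanly to Eckart--Young. The degenerate case of singular $\Lambda$ is a minor technicality, handled by restricting attention to $\textup{range}(\Lambda)$.
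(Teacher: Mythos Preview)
Your proposal is correct and follows essentially the same route as the paper: both complete the square to recast the rank-constrained problem as a Frobenius low-rank approximation of a diagonal target in the eigenbasis of $\Lambda$, and then use monotonicity of the relevant function of $\lambda$ to show the retained indices are $1,\dots,r$. The only cosmetic difference is that you invoke Eckart--Young explicitly (which cleanly justifies why the minimizer is diagonal), whereas the paper asserts diagonality directly and then uses a coordinate-wise contradiction argument with the function $g(x)=x^2/((1+1/N)x+\tr(\D)/N)$ in place of your $\lambda\mapsto\lambda/\sqrt{(1+1/N)\lambda+\tr(\D)/N}$.
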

}
}

\begin{proof}[Proof sketch of \Cref{thm:low_opt}]
We defer the full proof to \Cref{app:thm:low_opt}.
The proof idea is that we can decompose the loss function into different ranks, so we can keep the direction by their sorted  ``variance'', i.e., 
\begin{align*}
    \argmin_{\U \in \R^{d\times d}, \rank(\U) \le r, u \in \R} \tilde{\ell}(\U, u) =& \sum_{i=1}^d  T_i\lambda_i^2\left(v^*_i - {1\over T_i}\right)^2,
\end{align*}
where $T_i = \left(1+ {1\over N} \right)\lambda_i + {\tr(\eigen)\over N}$.
We have that $v^*_i \ge 0$ for any $i \in [d]$ and if $v^*_i > 0$, we have $v^*_i = {1\over T_i}$. Denote $g(x) = x^2\left({1\over {\left(1+ {1\over N} \right)x + {\tr(\eigen)\over N}}}\right)$. We get the conclusion by $g(x)$ is an increasing function on $[0,\infty)$. 
\end{proof}

\Cref{thm:low_opt} gives the closed-form optimal rank-$r$ solution of one-layer single-head linear self-attention transformer learning linear regression ICL tasks. 
Let $f_{\textup{LSA},\theta}$ denote the optimal rank-$r$ solution corresponding to the $\U^*, u^*$ above. 
In detail, 
the optimal rank-$r$ solution $f_{\textup{LSA},\theta}$ satisfies 
\begin{align} \label{eq:optimal-rank-r}
& \W^{*PV} = \begin{pmatrix}
0_{d\times d} & 0_d \\
0_d^\top & u 
\end{pmatrix},  \W^{*KQ} = \begin{pmatrix}
\U^* & 0_d \\
0_d^\top & 0 
\end{pmatrix}.
\end{align}

\textbf{What hidden features does the model pay attention to?}
\Cref{thm:low_opt} shows that the optimal rank-$r$ solution indeed is the truncated version of the optimal full-rank solution, keeping only the most important feature directions (i.e., the first $r$ eigenvectors of the token covariance matrix). In detail, (1) for the optimal full-rank solution, we have for any $i \in [d], v^*_i = {N \over {\left(N+1 \right)\lambda_i + {\tr(\eigen)}}}$; (2) for the optimal rank-$r$ solution, we have for any $i \le r, v^*_i = {N \over {\left(N+1 \right)\lambda_i + {\tr(\eigen)}}}$ and for any $i > r, v^*_i = 0$. 
That is, the small rank-$r$ model keeps only the first $r$ eigenvectors (viewed as hidden feature directions) and does not cover the others, while larger ranks cover more hidden features, and the large full rank model covers all features.

Recall that the prediction depends on $\U^* \x_{\tau,q} = c \Q \V^* \Q^\top \x_{\tau,q}$; see \Cref{eq:query-prediction} and (\ref{eq:optimal-rank-r}). So the optimal rank-$r$ model only uses the components on the first $r$ eigenvector directions to do the prediction in evaluations. When there is noise distributed in all directions, a smaller model can ignore noise and signals along less important directions but still keep the most important directions. Then it can be less sensitive to the noise, as empirically observed. This insight is formalized in the next subsection.

\subsection{Behavior Difference}\label{sec:diff}
We now formalize our insight into the behavior difference based on our analysis on the optimal solutions. 
We consider the evaluation prompt to have $M$ examples (may not be equal to the number of examples $N$ during pretraining for a general evaluation setting), and assume noise in labels to facilitate the study of the behavior difference (our results can be applied to the noiseless case by considering noise level $\sigma = 0$). Formally, the evaluation prompt is:
\begin{align*}
    \widehat{\bE}
        := &  \begin{pmatrix}
        \x_{1} & \x_{2} & \dots & \x_{M} & \x_{q} \\
        y_{1} & y_{2} & \dots & y_{M} & 0 \\
        \end{pmatrix} \in \R^{(d+1)\times (M+1)} \\
        = &  \begin{pmatrix}
        \x_{1} &\dots & \x_{M} & \x_{q} \\
        \inner{\w, \x_{1}} + \epsilon_1 &  \dots & \inner{\w, \x_{M}} + \epsilon_M & 0 \\
        \end{pmatrix} ,
\end{align*}
where $\w$ is the weight for the evaluation task, and for any $i \in [M]$, the label noise $\epsilon_i \iid \cN(0, \sigma^2)$.

Recall $\Q$ are eigenvectors of $\Lambda$, i.e., $\Lambda = \Q \eigen \Q^\top$ and $\eigen = \diag([\lambda_1, \dots, \lambda_d])$.
In practice, we can view the large variance part of $\x$ (top $r$ directions in $\Q$) as a useful signal (like words ``positive'', ``negative''), and the small variance part (bottom $d-r$ directions in $\Q$) as the less important or useless information (like words ``even'', ``just''). 

Based on such intuition, we can decompose the evaluation task weight $\w$ accordingly: $\w =  \Q(\s + \xi)$, where the $r$-dim truncated vector $\s \in \R^d$ has $\s_i = 0$ for any $r < i \le d$, and the residual vector $\xi \in \R^d$ has $\xi_i = 0$ for any $1 \le i \le r$. The following theorem (proved in \Cref{app:thm:mse}) quantifies the evaluation loss at different model scales $r$ which can explain the scale's effect. 
\todo[inline,color=gray!10]{
{\normalsize
\vspace{-0.5em}
\begin{restatable}[Behavior difference for regression]{thm}{mse}\label{thm:mse}
Let $\w = \Q(\s + \xi) \in \R^d$ where $\s, \xi \in \R^d$ are truncated and residual vectors defined above. The optimal rank-$r$ solution $f_{\textup{LSA},\theta}$ in \Cref{thm:low_opt} satisfies:
\begin{align*}
    &  \mathcal{L}(f_{\textup{LSA},\theta};\widehat{\bE}) \\ := & \E_{\x_1,\epsilon_1,\dots,\x_M,\epsilon_M,\x_q}\left( f_{\textup{LSA},\theta}(\widehat{\bE}) - \inner{\w,\x_q}\right)^2 \\
    = & {1\over M} \|\s \|_{(\V^*)^2 \eigen^3}^2 + {1\over M} \left(\|\s + \xi\|_\eigen^2 + \sigma^2\right)   \tr\left(    (\V^*)^2 \eigen^2\right) \\
    & + \|\xi\|_\eigen^2  + \sum_{i \in[r]} \s_i^2 \lambda_i\left( \lambda_i v^*_i - 1 \right)^2. 
\end{align*}
\end{restatable}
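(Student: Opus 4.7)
The plan is to expand the prediction under the optimal rank-$r$ parameters into a concrete expression in the Gaussian covariates and labels, and then reduce the expected squared error to traces using independence and Isserlis' theorem for Gaussian fourth moments. First I would substitute \Cref{eq:optimal-rank-r} into the prediction formula \Cref{eq:query-prediction} with $\rho=M$; since the optimal solution has $\w^{*PV}_{21}=0$, $\w^{*KQ}_{21}=0$, and $w^{*PV}_{22}=1$, the prediction collapses to
\[
f_{\textup{LSA},\theta}(\widehat{\bE}) = \frac{1}{M}(\X\w+\epsilon)^\top \X \U^* \x_q,
\]
where $\X=[\x_1,\dots,\x_M]^\top$, $\epsilon=(\epsilon_1,\dots,\epsilon_M)^\top$, and $\U^*=\Q\V^*\Q^\top$ is symmetric. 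Writing the sample covariance $\widehat{\Lambda} := \frac{1}{M}\X^\top\X$, the residual splits as
\[
f_{\textup{LSA},\theta}(\widehat{\bE}) - \inner{\w,\x_q} = \w^\top(\widehat{\Lambda}\U^* - I)\x_q + \frac{1}{M}\epsilon^\top \X \U^* \x_q.
\]
After squaring, the cross term vanishes in expectation because $\epsilon$ is zero-mean and independent of $\X,\x_q$; the pure-noise piece reduces to $\frac{\sigma^2}{M}\tr(\U^*\Lambda\U^*\Lambda)$ via $\E[\epsilon\epsilon^\top]=\sigma^2 I_M$ and $\E[\X^\top\X]=M\Lambda$.

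Next I would handle the signal piece $\E[(\w^\top(\widehat{\Lambda}\U^*-I)\x_q)^2]$, which opens into three terms. Independence of $\X$ and $\x_q$ together with $\E[\x_q\x_q^\top]=\Lambda$ immediately reduce two of them to $-2\w^\top\Lambda\U^*\Lambda\w + \w^\top\Lambda\w$. The remaining piece equals $\w^\top\,\E[\widehat{\Lambda}\U^*\Lambda\U^*\widehat{\Lambda}]\,\w$, and this is the main technical obstacle. I would expand
\[
\widehat{\Lambda}\U^*\Lambda\U^*\widehat{\Lambda} = \frac{1}{M^2}\sum_{i,j=1}^M \x_i\x_i^\top B \x_j\x_j^\top, \quad B := \U^*\Lambda\U^*,
\]
where $B$ is symmetric because $\U^*$ is. The $M(M-1)$ off-diagonal pairs ($i\ne j$) each contribute $\Lambda B\Lambda$ by independence of distinct $\x_i,\x_j$. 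The $M$ diagonal terms require Isserlis' identity, which gives $\E[\x\x^\top B\x\x^\top]=2\Lambda B\Lambda + \tr(B\Lambda)\Lambda$ for symmetric $B$. Summing produces
\[
\E[\widehat{\Lambda}\U^*\Lambda\U^*\widehat{\Lambda}] = \frac{M+1}{M}\Lambda\U^*\Lambda\U^*\Lambda + \frac{1}{M}\tr(\U^*\Lambda\U^*\Lambda)\,\Lambda.
\]

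Finally, I would diagonalize by plugging in $\U^*=\Q\V^*\Q^\top$, $\Lambda=\Q\eigen\Q^\top$, and $\w=\Q(\s+\xi)$. Because $\V^*$ and $\eigen$ are diagonal and the supports of $\s$ and $\xi$ are disjoint (the first $r$ versus the last $d-r$ coordinates), every weighted quadratic form whose weight contains $\V^*$ annihilates $\xi$: explicitly, $\w^\top\Lambda\U^*\Lambda\U^*\Lambda\w=\|\s\|_{(\V^*)^2\eigen^3}^2$, $\w^\top\Lambda\U^*\Lambda\w=\|\s\|_{\V^*\eigen^2}^2$, $\tr(\U^*\Lambda\U^*\Lambda)=\tr((\V^*)^2\eigen^2)$, and $\w^\top\Lambda\w=\|\s\|_\eigen^2+\|\xi\|_\eigen^2=\|\s+\xi\|_\eigen^2$. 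Collecting all contributions and grouping via the elementary identity
\[
\|\s\|_{(\V^*)^2\eigen^3}^2 - 2\|\s\|_{\V^*\eigen^2}^2 + \|\s\|_\eigen^2 = \sum_{i\in[r]} \s_i^2 \lambda_i (\lambda_i v^*_i - 1)^2
\]
yields the claimed closed form. The only genuinely delicate step is the Gaussian fourth-moment calculation; the rest is bookkeeping in the eigenbasis, with the truncation pattern ($\s_i=0=v^*_i$ for $i>r$) automatically eliminating all potential signal-residual cross terms.
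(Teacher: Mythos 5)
Your proposal is correct and follows essentially the same route as the paper: expand the rank-$r$ prediction, split off the label-noise term by independence, handle the Gaussian fourth moment $\E[\widehat{\Lambda}\U^*\Lambda\U^*\widehat{\Lambda}]$ via Isserlis' theorem, and diagonalize in the $\Q$ eigenbasis where the disjoint supports of $\s$ and $\xi$ (versus $\V^*$) kill the cross terms. Your packaging of the fourth-moment step as $\E[\x\x^\top B\x\x^\top]=2\Lambda B\Lambda+\tr(B\Lambda)\Lambda$ for symmetric $B$ is a slightly more direct form of the paper's Lemma on $\E[y^2\x\x^\top]$, and the final regrouping identity matches the paper's collection of terms (III) and (IV).
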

}
}

\textbf{Implications.} If $N$ is large enough with $N \lambda_r \gg \tr(\eigen)$ (which is practical as we usually pretrain networks on long text), then
\begin{align*}
    \mathcal{L}(f_{\textup{LSA},\theta};\widehat{\bE}) {\approx} & \|\xi\|_\eigen^2  + {1\over M} \left((r+1)\|\s \|_{\eigen}^2 +  r\|\xi\|_\eigen^2 + r\sigma^2   \right).
\end{align*}

The first term $\|\xi\|_\eigen^2$ is due to the residual features not covered by the network, so it decreases for larger $r$ and becomes $0$ for full-rank $r=d$.  
The second term ${1\over M}(\cdot)$ 
is significant since we typically have limited examples in evaluation, e.g., $M = 16 \ll N$. Within it, $(r+1)\|\s \|_{\eigen}^2$ corresponds to the first $r$ directions, and $r \sigma^2$ corresponds to the label noise. These increase for larger $r$. 
So there is a trade-off between the two error terms when scaling up the model: for larger $r$ the first term decreases while the second term increases. This depends on whether more signals are covered or more noise is kept when increasing the rank $r$. 

To further illustrate the insights, we consider the special case when the model already covers all useful signals in the evaluation task: $\w =\Q\s$, i.e., the label only depends on the top $r$ features (like ``positive'', ``negative'' tokens). Our above analysis implies that a larger model will cover more useless features and keep more noise, and thus will have worse performance. This is formalized in the following theorem (proved in \Cref{app:thm:mse}).

\todo[inline,color=gray!10]{
{\normalsize
\vspace{-0.5em}
\begin{restatable}[Behavior difference for regression, special case]{thm}{diff}\label{prop:diff}
Let $0\le {r} \le {r'} \le d$ and $\w = \Q\s$ where $\s$ is ${r}$-dim truncated vector. Denote the optimal rank-${r}$ solution as $f_1$ and the optimal rank-${r'}$ solution as $f_2$. Then, 
\begin{align*}
    & \mathcal{L}(f_2;\widehat{\bE}) - \mathcal{L}(f_1;\widehat{\bE}) \\
    = & {1\over M} \left(\|\s \|_\eigen^2 + \sigma^2\right)   \left(\sum_{i = {r}+1}^{{r'}} \left({N \lambda_i \over {\left(N+1 \right)\lambda_i + {\tr(\eigen)}}}\right)^2\right).
\end{align*}
\end{restatable}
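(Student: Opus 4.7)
The plan is to derive the claim as a direct corollary of \Cref{thm:mse} by evaluating that formula at both ranks and subtracting. The key structural observation is that the hypothesis $\w = \Q\s$ with $\s$ an $r$-dim truncated vector means the decomposition $\w = \Q(\s+\xi)$ from \Cref{thm:mse} has $\xi = 0$, and moreover the support of $\s$ lies in the first $r$ coordinates, which both $f_1$ and $f_2$ ``cover'' identically since they share the same $v_i^*$ values on indices $i \le r$ by the closed form in \Cref{thm:low_opt}.

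First I would write out $\mathcal{L}(f_j;\widehat{\bE})$ for $j \in \{1,2\}$ using \Cref{thm:mse}, denoting the rank-$r$ solution's diagonal as $v_i^{*(1)}$ and the rank-$r'$ solution's diagonal as $v_i^{*(2)}$. By \Cref{thm:low_opt}, $v_i^{*(1)} = v_i^{*(2)} = N/((N+1)\lambda_i + \tr(\eigen))$ for $i \le r$, and $v_i^{*(1)} = 0$ for $i > r$, while $v_i^{*(2)} = N/((N+1)\lambda_i + \tr(\eigen))$ for $r < i \le r'$ and $0$ for $i > r'$.

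Next I would check term by term which pieces cancel in the subtraction. The term $\|\xi\|_\eigen^2$ is $0$ in both evaluations. The term $\|\s\|_{(\V^*)^2\eigen^3}^2 = \sum_i \s_i^2 (v_i^*)^2 \lambda_i^3$ is identical for the two models because $\s_i = 0$ for $i > r$ and the $v_i^*$'s agree on $i \le r$. The final sum $\sum_{i\in [r_j]} \s_i^2 \lambda_i(\lambda_i v_i^* - 1)^2$ for model $j$ is likewise identical across $j$: the extra indices $r < i \le r'$ for $f_2$ contribute nothing because $\s_i = 0$ there. The only surviving difference is in $\tr((\V^*)^2 \eigen^2) = \sum_i (v_i^*)^2 \lambda_i^2$, whose gap is exactly $\sum_{i=r+1}^{r'}(v_i^{*(2)})^2 \lambda_i^2 = \sum_{i=r+1}^{r'} (N\lambda_i/((N+1)\lambda_i + \tr(\eigen)))^2$.

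Finally I would multiply this difference by the common prefactor $(1/M)(\|\s+\xi\|_\eigen^2 + \sigma^2) = (1/M)(\|\s\|_\eigen^2 + \sigma^2)$ to obtain the stated identity. There is no real obstacle here beyond careful bookkeeping of which indices contribute to each term; the nontrivial content has already been absorbed into \Cref{thm:low_opt} and \Cref{thm:mse}, and this special case is chosen precisely so that the ``signal'' and ``residual'' terms cancel and only the noise/redundant-direction term remains, giving a clean formula that exhibits the larger model's strictly worse performance whenever $\lambda_i > 0$ for some $r < i \le r'$.
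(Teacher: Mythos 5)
Your proposal is correct and follows essentially the same route as the paper's proof: substitute the closed form from \Cref{thm:low_opt} into \Cref{thm:mse}, use $\xi=0$ and the observation that $\s_i=0$ for $i>r$ to kill the $\|\s\|_{(\V^*)^2\eigen^3}^2$ and $\sum_i \s_i^2\lambda_i(\lambda_i v_i^*-1)^2$ differences, and read off the surviving $\tr((\V^*)^2\eigen^2)$ gap multiplied by the common prefactor $\frac{1}{M}(\|\s\|_\eigen^2+\sigma^2)$. The bookkeeping matches the paper exactly.
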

}
}

\textbf{Implications.} By \Cref{prop:diff}, in this case,   
\begin{align*}
    \mathcal{L}(f_2;\widehat{\bE}) - \mathcal{L}(f_1;\widehat{\bE}) \approx  \underbrace{{{r'} - {r}\over M}\|\s \|_\eigen^2}_{\text{input noise}} + \underbrace{{{r'} - {r}\over M}\sigma^2}_{\text{label noise}} \label{eq:diff}.
\end{align*}
We can decompose the above equation to input noise and label noise, and we know that $\|\s \|_\eigen^2 + \sigma^2$ only depends on the intrinsic property of evaluation data and is independent of the model size. 
When we have a larger model (larger ${r'}$), we will have a larger evaluation loss gap between the large and small models. It means larger language models may be easily affected by the label noise and input noise and may have worse in-context learning ability, while smaller models may be more robust to these noises as they only emphasize important signals. Moreover, if we increase the label noise scale $\sigma^2$ on purpose, the larger models will be more sensitive to the injected label noise. This is consistent with the observation in \citet{wei2023larger,shi2023large} and our experimental results in \Cref{sec:exp}.

\section{Sparse Parity Classification}\label{sec:parity}

We further consider a more sophisticated setting with nonlinear data which necessitates nonlinear models. Viewing sentences as generated from various kinds of thoughts and knowledge that can be represented as vectors in some hidden feature space, we consider the classic data model of dictionary learning or sparse coding, which has been widely used for text and images~\cite{Olshausen1997SparseCW,vinje2000sparse,blei2003latent}.  
Furthermore, beyond linear separability, we assume the labels are given by the $(d, 2)$-sparse parity on the hidden feature vector, which is the high-dimensional generalization of the classic XOR problem. Parities are a canonical family of highly non-linear learning problems and recently have been used in many recent studies on neural network learning~\cite{dm20,beg+22,swl22,shi2023provable}.

\textbf{Data and task.}
Let $\cX = \R^d$ be the input space, and $\cY = \{\pm 1\}$ be the label space. Suppose $\G \in \R^{d \times d}$ is an unknown dictionary with $d$ columns that can be regarded as features; for simplicity, assume $\G$ is
orthonormal. Let $\phi \in \{\pm 1\}^d$
be a hidden vector that indicates the presence of each feature. 
The data are generated as follows: for each task $\tau$, generate two task indices $\t_\tau = (i_\tau,j_\tau)$ which determines a distribution $\cT_\tau$; then for this task, draw examples by $\phi \sim \cT_\tau$, and setting $\x = \G \phi$ (i.e., dictionary learning data), $y = \phi_{i_\tau} \phi_{j_\tau}$ (i.e., XOR labels). 

We now specify how to generate $\t_\tau$ and $\phi$. 
As some of the hidden features are more important than others, we let $A = [k]$ denote a subset of size $k$ corresponding to the important features. 
We denote the important task set as $S_1: = A\times A \setminus \{(l,l): l \in A\}$ and less important task set as $S_2: = [d]\times [d] \setminus (\{(l,l): l \in [d]\} \cup S_1)$. Then $\t_\tau$ is drawn uniformly from $S_1$ with probability $1-p_\cT$, and uniformly from $S_2$ with probability $p_\cT$, where $p_\cT \in [0, {1\over 2})$ is the less-important task rate. 
For the distribution of $\phi$, we assume $\phi_{[d]\setminus\{i_\tau,j_\tau\}}$ is drawn uniformly from $\{\pm 1\}^{d-2}$, and assume $\phi_{\{i_\tau,j_\tau\}}$ has good correlation (measured by a parameter $\gamma \in (0, {1\over 4})$) with the label to facilitate learning.  
Independently, we have 
\begin{align*}
    &\Pr[(\phi_{i_\tau},\phi_{j_\tau}) = (1,1)] = {1/ 4} + \gamma, \\
    &\Pr[(\phi_{i_\tau},\phi_{j_\tau}) = (1,-1)] = {1/ 4}, \\
    &\Pr[(\phi_{i_\tau},\phi_{j_\tau}) = (-1,1)] = {1/ 4}, \\
    &\Pr[(\phi_{i_\tau},\phi_{j_\tau}) = (-1,-1)] = {1/ 4} - \gamma.
\end{align*} 
Note that without correlation ($\gamma=0$), it is well-known sparse parities will be hard to learn, so we consider $\gamma >0$.

\textbf{Model.} 
Following~\citet{wu2023many}, we consider the reduced linear self-attention 
$
    f_{\textup{LSA},\theta}(\X, \y, \x_q) = {\y^\top \X \over N} \W^{KQ} \x_q 
$
(which is a reduced version of \Cref{eq:single_attention}), and also denote $\W^{KQ}$ as $\W$ for simplicity. 
It is used as the neuron in our two-layer multiple-head transformers:
\begin{align*}
    g(\X, \y, \x_q) = \sum_{i \in [m]} \a_i \act\left[ {\y^\top \X \over N} \W^{(i)} \x_q \right], 
\end{align*}
where $\act$ is ReLU activation, $\a = [\a_1, \dots, \a_{m}]^\top \in [-1,1]^{m}$, $\W^{(i)} \in \R^{d \times d}$ and $m$ is the number of attention heads. Denote its parameters as $\theta = (\a, \W^{(1)}, \dots, \W^{(m)})$. 

This model is more complicated as it uses non-linear activation, and also has two layers with multiple heads.

\textbf{Measure model scale by head number.} We use the attention head number $m$ to measure the model scale, as a larger $m$ means the transformer can learn more attention patterns. 
We consider hinge loss $\ell(z) = \max(0, 1 - z) $, 
and the population loss with weight-decay regularization:
\begin{align*}
    \cL^\lambda(g) = & \E\left[\ell\left( y_{q}\cdot g(\X, \y, \x_{q})\right) \right] + \lambda\left(\sum_{i \in [m]} \|\W^{(i)}\|_F^2 \right) \nonumber.
\end{align*}
Suppose $N \rightarrow \infty$ and let the optimal solution of $\cL^\lambda(g)$ be 
\begin{align*}
g^*= \argmin_{g} \quad \lim_{\lambda \rightarrow 0^+} \cL^\lambda(g).
\end{align*}

\subsection{Optimal Solution}
We first introduce some notations to describe the optimal. Let $\bin(\cdot)$ be the integer to binary function, e.g., $\bin(6) = 110$. Let  $\digit(z,i)$ denote the digit at the $i$-th position (from right to left) of $z$, e.g., $\digit(01000,4)=1$. 
We are now ready to characterize the optimal solution (proved in~\Cref{app:thm:opt_parity}).
\todo[inline,color=gray!10]{
{\normalsize
\vspace{-0.5em}
\begin{restatable}[Optimal solution for parity]{thm}{optparity}\label{theorem:opt_parity}
Consider $k = 2^{\nu_1}, d=2^{\nu_2}$, and let $g^*_1$ and $g^*_2$ denote the optimal solutions for $m = 2(\nu_1+1)$ and $m = 2(\nu_2+1)$, respectively.  

When $0< p_\cT < { {1\over 4} - \gamma \over {d(d-1) \over 2} ({1\over 4} + \gamma) +  {1\over 4} - \gamma}$,
$g^*_1$ neurons are a subset of $g^*_2$ neurons. Specifically, for any $i \in [2(\nu_2 + 1)]$, let $\V^{*,(i)}$ be diagonal matrix and 
\begin{itemize}
    \item For any $i\in [\nu_2]$ and $i_\tau \in [d]$, let $ \a^*_i = -1$ and $\V_{i_\tau,i_\tau}^{*,(i)} = (2\digit(\bin(i_\tau-1), i)-1)/(4\gamma)$.
    \item For $i = \nu_2 +1$ and any $i_\tau \in [d]$, let $ \a^*_i = +1$ and $\V_{i_\tau,i_\tau}^{*,(i)} = -\nu_j/(4\gamma)$ for $g^*_j$. 
    \item For $i \in [2(\nu_2 + 1)]\setminus [\nu_2 + 1]$, let $ \a^*_i= \a^*_{i-\nu_2- 1} $ and $\V^{*,(i)} = - \V^{*,(i-\nu_2- 1)}$. 
\end{itemize}
Let $\W^{*,(i)} = \G\V^{*,(i)}\G^\top$. Up to permutations, $g^*_2$ has neurons $(\a^*, \W^{*,(1)}, \dots, \W^{*,(m)})$ and $g^*_1$ has the $\{1,\dots, \nu_1, \nu_2+1, \nu_2+2 \dots, \nu_2+\nu_1+1, 2\nu_2+2 \}$-th neurons of $g^*_2$. 
\end{restatable}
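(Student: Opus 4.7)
The plan has three stages: reduce each attention head to an explicit function of $(\phi_{q,i_\tau},\phi_{q,j_\tau})$ and a pair of diagonal weights (Step 1); verify that the proposed construction attains zero hinge loss on every task it is designed to cover (Step 2); and prove optimality among $m$-head configurations in the vanishing-regularization limit, using the stated $p_\cT$ bound to ensure that ignoring $S_2$ is strictly preferable for $g^*_1$ (Step 3).

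For Step 1, direct computation using the parity distribution and the independence of $\phi_l$ for $l\notin\{i_\tau,j_\tau\}$ gives $\E[\y^\top\X/N]=2\gamma(\mathbf{e}_{i_\tau}+\mathbf{e}_{j_\tau})^\top\G^\top$ in the $N\to\infty$ limit, so each head reduces to $\act(2\gamma(\mathbf{e}_{i_\tau}+\mathbf{e}_{j_\tau})^\top\G^\top\W^{(i)}\x_q)$. Writing $\W^{(i)}=\G\tilde\V^{(i)}\G^\top$ (available since $\G$ is orthonormal) and exploiting the permutation/sign symmetries of the task distribution over $A=[k]$ and over $[d]\setminus A$, one shows that both the loss and the regularizer are weakly minimized by zeroing the off-diagonal entries of $\tilde\V^{(i)}$. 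Hence it suffices to take each $\V^{(i)}$ diagonal, and a head collapses to $\act(2\gamma(\V^{(i)}_{i_\tau,i_\tau}\phi_{q,i_\tau}+\V^{(i)}_{j_\tau,j_\tau}\phi_{q,j_\tau}))$, a function of at most two bits and two scalars.

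For Step 2, substitute the proposed $\V^{*,(i)}_{i_\tau,i_\tau}=s_i(i_\tau)/(4\gamma)$ with $s_i(k):=2\digit(\bin(k-1),i)-1\in\{\pm 1\}$ for signal heads $i\in[\nu]$. Each signal-head input becomes $(s_i(i_\tau)\phi_{q,i_\tau}+s_i(j_\tau)\phi_{q,j_\tau})/2\in\{-1,0,1\}$, and a four-case analysis on $(\phi_{q,i_\tau},\phi_{q,j_\tau})$ shows that exactly one of the signal/mirror pair at bit $i$ activates iff $\phi_{q,i_\tau}\phi_{q,j_\tau}=s_i(i_\tau)s_i(j_\tau)$, i.e., iff the query label matches the XOR of the two $i$-th index bits. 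Summing over $i\in[\nu]$, the signal-plus-mirror block contributes $-M(i_\tau,j_\tau)$ when $y_q=+1$ and $-D(i_\tau,j_\tau)$ when $y_q=-1$, where $M$ and $D$ count matching and differing bits with $M+D=\nu$. The bias pair contributes $+\nu$ when $y_q=+1$ and $0$ when $y_q=-1$. Combining, the output equals $D(i_\tau,j_\tau)\cdot y_q$, which has margin $D\ge 1$ whenever $i_\tau\neq j_\tau$, so hinge loss is zero on every covered task: all tasks for $g^*_2$ (with $\nu=\nu_2$) and tasks in $S_1$ for $g^*_1$ (with $\nu=\nu_1$, since the low-order $\nu_1$ bits already distinguish indices in $[k]$).

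For Step 3, a bit-counting argument shows that at least $\nu+1$ signal neurons plus a mirror block are necessary for unit margin on all parities over $2^\nu$ indices: omitting any bit merges two distinct index pairs into identical head inputs. Among configurations achieving unit margin, KKT stationarity for $\cL^\lambda$ as $\lambda\to 0^+$ pins each $\|\W^{(i)}\|_F^2$ to the value produced by the construction. For $g^*_1$ one must further verify that extending coverage to $S_2$ strictly raises the regularized loss; this is where the bound on $p_\cT$ enters. The expected hinge loss $g^*_1$ incurs on $S_2$ equals a positive constant (coming from pairs whose first $\nu_1$ bits collide, so $D^{(\nu_1)}=0$), weighted by $p_\cT$, and the stated threshold $p_\cT<({1\over 4}-\gamma)/[{d(d-1)\over 2}({1\over 4}+\gamma)+{1\over 4}-\gamma]$ is precisely the ratio at which this $S_2$ loss is dominated by the F-norm cost of repurposing heads to fix it. Since $\bin$ on $[k]-1$ is the low-order prefix of $\bin$ on $[d]-1$, the optimal $g^*_1$ neurons match the indicated subset of $g^*_2$ neurons up to permutation. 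The main obstacle is this optimality lower bound: one must rule out exotic $2(\nu_1+1)$-head configurations that mix signal across important and less-important coordinates in a way that is simultaneously cheaper in F-norm and tolerable in hinge loss. The diagonal reduction in Step 1 and the symmetry of the data tame most of this, but completing the argument requires explicit KKT analysis for the hinge-ReLU composition together with the combinatorial rigidity of threshold-ReLU representations of parities, and careful bookkeeping of the constants entering the $p_\cT$ threshold.
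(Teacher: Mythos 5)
Your proposal tracks the paper's proof quite closely: compute $\E[\y^\top\X/N]=2\gamma(\mathbf{e}_{i_\tau}+\mathbf{e}_{j_\tau})^\top\G^\top$, change basis by $\G$ and reduce to diagonal $\V^{(i)}$, verify the construction achieves margin, lower-bound the head count, and invoke the $p_\cT$ threshold to explain why $g^*_1$ spends all its heads on $S_1$. Your Step 2 is in fact a bit more explicit than the paper's verification: the four-case analysis showing that the signal/mirror pair at bit $i$ fires iff $\phi_{q,i_\tau}\phi_{q,j_\tau}=s_i(i_\tau)s_i(j_\tau)$, and the tally that the output equals $D(i_\tau,j_\tau)\cdot y_q$ with $D\ge 1$, is a clean way to confirm the unit margin that the paper states more tersely via its pattern-covering language.

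The genuine gap is in your Step 3 account of where the $p_\cT$ threshold comes from. You claim the threshold is ``precisely the ratio at which this $S_2$ loss is dominated by the F-norm cost of repurposing heads to fix it.'' That is not what the bound encodes, and an F-norm-vs-loss trade-off cannot produce a threshold of that form because the F-norm term is scaled by $\lambda\to 0^+$ and vanishes. The actual role of the bound $p_\cT<\frac{\frac{1}{4}-\gamma}{\frac{d(d-1)}{2}(\frac{1}{4}+\gamma)+\frac{1}{4}-\gamma}$ is a loss-vs-loss comparison between the two pattern classes: it is equivalent to $\frac{d(d-1)}{2}\,p_\cT(\tfrac14+\gamma)<(1-p_\cT)(\tfrac14-\gamma)$, which says that the loss reduction from a single extra neuron covering $S_2$ patterns (one neuron can cover at most $\frac{d(d-1)}{2}$ patterns, each weighted by at most $p_\cT(\tfrac14+\gamma)$) is strictly less than the loss incurred by leaving even the least-weighted $S_1$ pattern uncovered (weighted by at least $(1-p_\cT)(\tfrac14-\gamma)$). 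This is what forces the $2(\nu_1+1)$-head optimum to fully cover $S_1$ before touching $S_2$. The $\lambda\to 0^+$ limit and the F-norm enter only afterward, as a tie-breaker among loss-minimizers that pins the scalar magnitudes $\frac{1}{4\gamma}$ and $\frac{\nu}{4\gamma}$ to their smallest feasible values; it has nothing to do with the $p_\cT$ threshold. Your proposal conflates these two mechanisms, and without separating them the optimality argument for $g^*_1$ does not close. One smaller point: both you and the paper assert the diagonal reduction of $\V^{(i)}$ by symmetry rather than proving it, so that is not a differentiating gap, but your lower bound sketch (``omitting any bit merges two distinct index pairs'') should be tightened to the paper's encoding argument, namely that the sign patterns $(\sign\V^{(1)}_{ii},\dots,\sign\V^{(\nu_2)}_{ii})$ across negative neurons must encode all $2^{\nu_2}$ indices distinctly, which forces at least $\nu_2$ mirror pairs plus two positive neurons.
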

}
}

\begin{proof}[Proof sketch of \Cref{theorem:opt_parity}]
The proof is challenging as the non-linear model and non-linear data. We defer the full proof to \Cref{app:thm:opt_parity}. The high-level intuition is transferring the optimal solution to patterns covering problems. For small $p_\cT$, the model will ``prefer'' to cover all patterns in $S_1$ first. When the model becomes larger, by checking the sufficient and necessary conditions, it will continually learn to cover non-important features. Thus, the smaller model will mainly focus on important features, while the larger model will focus on all features. 
\end{proof}

\textbf{Example for \Cref{theorem:opt_parity}.} When $\nu_2 = 3$, the optimal has $\a_1 = \a_2 = \a_3 = -1$, $\a_4 = +1$ and, 
$$
\V^{(1)} = {1/ 4\gamma} \cdot \diag([-1,+1,-1,+1,-1,+1,-1,+1]) 
$$
$$
\V^{(2)} = {1/ 4\gamma}  \cdot \diag([-1,-1,+1,+1,-1,-1,+1,+1])
$$
$$
\V^{(3)} = {1/ 4\gamma}  \cdot \diag([-1,-1,-1,-1,+1,+1,+1,+1])
$$
$$
\V^{(4)} = {3 / 4\gamma}  \cdot \diag([-1,-1,-1,-1,-1,-1,-1,-1])
$$
and $\V^{(i+4)} = - \V^{(i)},  \a_{i+4}=\a_i $ for $i \in [4]$. 

On the other hand, the optimal $g^*_1$ for $\nu_1 = 1$ has the $\{1, 4, 5, 8\}$-th neurons of $g^*_2$. 

By carefully checking, we can see that the neurons in $g^*_1$ (i.e., the $\{1, 4, 5, 8\}$-th neurons of $g^*_2$) are used for parity classification task from $S_1$, i.e, label determined by the first $k=2^{\nu_1} = 2$ dimensions. With the other neurons (i.e., the $\{2, 3, 6, 7\}$-th neurons of $g^*_2$), $g^*_2$ can further do parity classification on the task from $S_2$, label determined by any two dimensions other than the first two dimensions.

\textbf{What hidden features does the model pay attention to?}
\Cref{theorem:opt_parity} gives the closed-form optimal solution of two-layer multiple-head transformers learning sparse-parity ICL tasks. 
It shows the optimal solution of the smaller model indeed is a sub-model of the larger optimal model. In detail, the smaller model will mainly learn all important features, while the larger model will learn more features.  
This again shows a trade-off when increasing the model scale: larger models can learn more hidden features which can be beneficial if these features are relevant to the label, but also potentially keep more noise which is harmful.

\begin{figure*}[!ht]
\begin{center}
\includegraphics[width=0.91\linewidth]{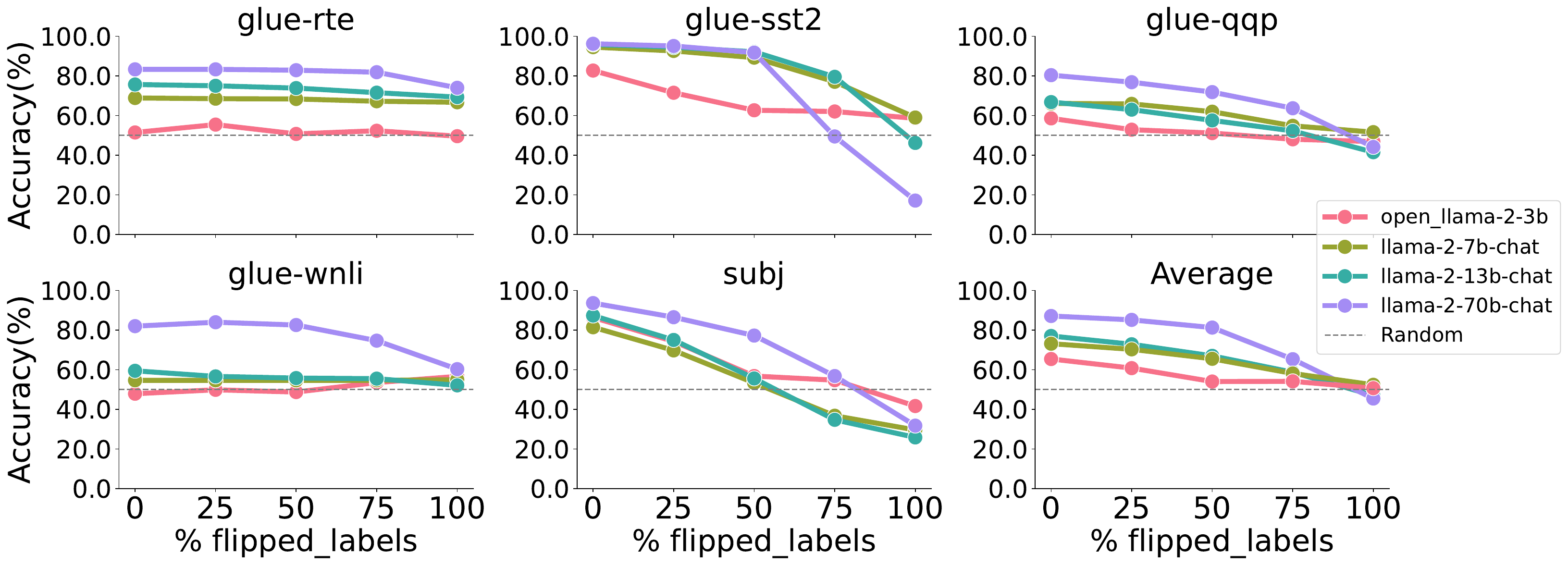}
\end{center}
\caption{  Larger models are easier to be affected by noise (flipped labels) and override pretrained biases than smaller models for different datasets and model families (chat/with instruct turning). Accuracy is calculated over 1000 evaluation prompts per dataset and over 5 runs with different random seeds for each evaluation, using $M = 16$ in-context exemplars. }
\label{fig:nlp_in_context}
\end{figure*}

\begin{figure*}[!ht]
\begin{center}
\includegraphics[width=0.91\linewidth]{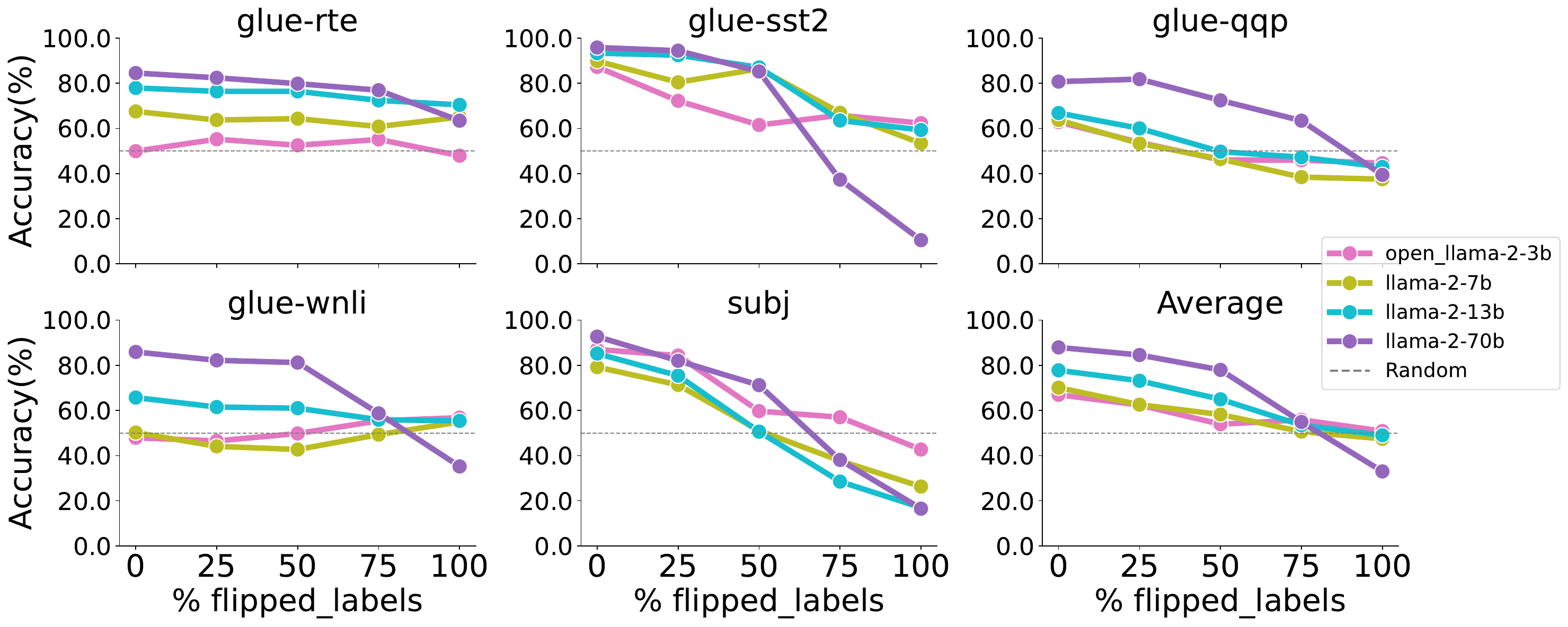}
\end{center}
\caption{  Larger models are easier to be affected by noise (flipped labels) and override pretrained biases than smaller models for different datasets and model families (original/without instruct turning). Accuracy is calculated over 1000 evaluation prompts per dataset and over 5 runs with different random seeds for each evaluation, using $M = 16$ in-context exemplars. }
\label{fig:nlp_in_context_nonchat}
\end{figure*}

\begin{figure}[!ht]
\begin{center}
\includegraphics[width=\linewidth]{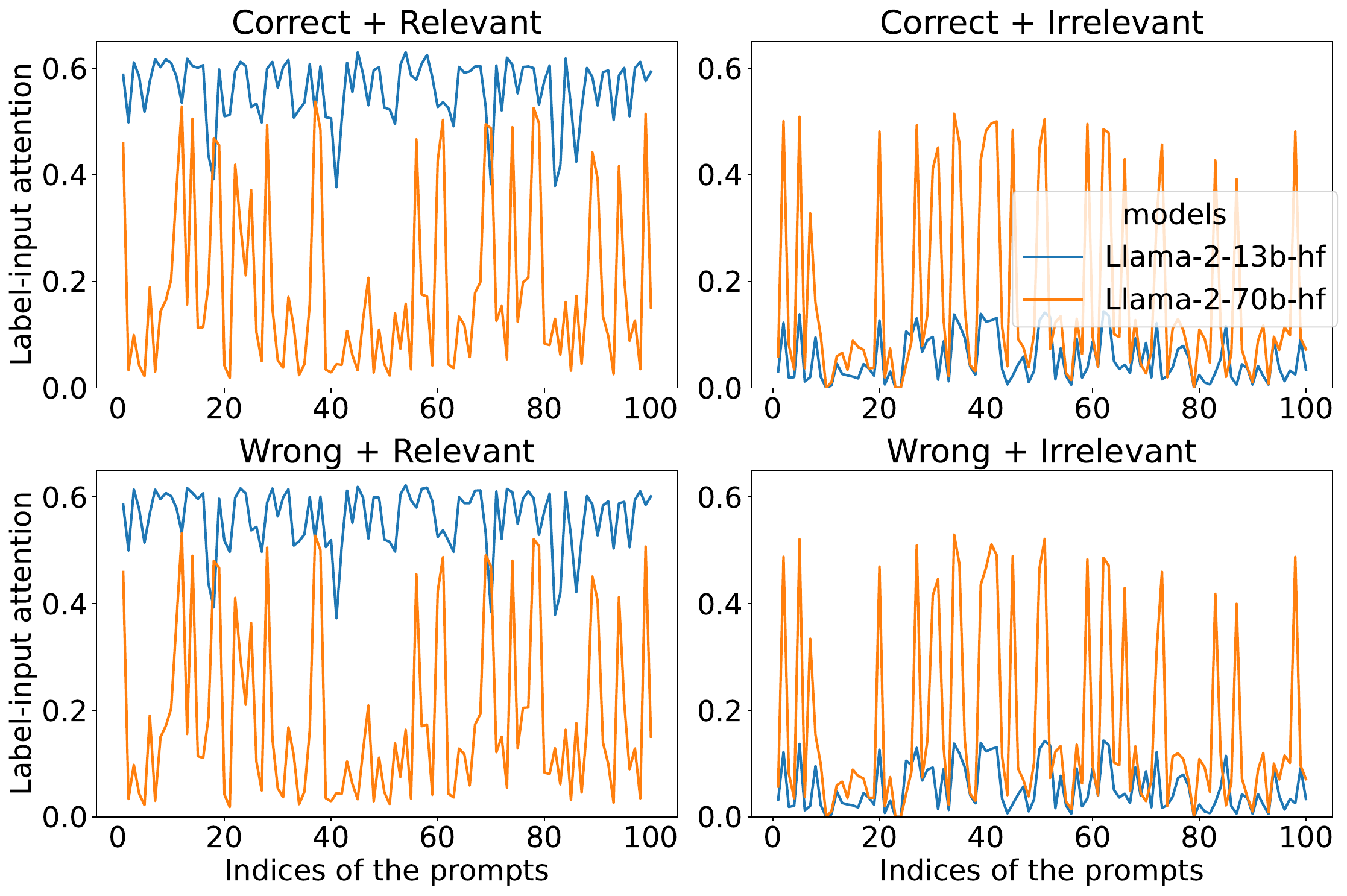}
\end{center}
\caption{
The magnitude of attention between the labels and input sentences in Llama 2-13b and 70b on 100 evaluation prompts; see the main text for the details. $x$-axis: indices of the prompts. $y$-axis: the norm of the last row of attention maps in the final layer.
Correct: original label; wrong: flipped label; relevant: original input sentence; irrelevant: irrelevant sentence from other datasets. 
The results show that larger models focus on both sentences, while smaller models only focus on relevant sentences. 
}
\label{fig:norm_comparison}
\end{figure}

\subsection{Behavior Difference}
Similar to \Cref{prop:diff}, to illustrate our insights, we will consider a setting where the smaller model learns useful features for the evaluation task while the larger model covers extra features. 
That is, for evaluation, we uniformly draw a task $\t_\tau = (i_\tau,j_\tau)$ from $S_1$, and then draw $M$ samples to form the evaluation prompt in the same way as during pretraining.   
To present our theorem (proved in \Cref{app:thm:parity_decouple} using \Cref{theorem:opt_parity}), we introduce some notations. 
Let 
\begin{align*}
    & \D_1 =  \big[\diag(\V^{*,(1)}), \dots, \diag(\V^{*,(\nu_1)}), \diag(\V^{*,(\nu_2+1)}), \\ 
    &\dots, \diag(\V^{*,(\nu_2+\nu_1+1)}), \diag(\V^{*,(2\nu_2+2)})\big] \in \R^{d \times 2(\nu_1 +1)}\\
    & \D_2 =  \left[\diag(\V^{*,(1)}), \dots, \diag(\V^{*,(2\nu_2+2)})\right] \in \R^{d \times 2(\nu_2 +1)},
\end{align*}
where for any $i \in [2(\nu_2 +1)]$, $\V^{*,(i)}$ is defined in \Cref{theorem:opt_parity}. 
Let $\hat{\phi}_{\tau, q} \in \R^d$ satisfy $\hat{\phi}_{\tau, q, i_\tau} = {\phi}_{\tau, q, i_\tau}, \hat{\phi}_{\tau, q, j_\tau} = {\phi}_{\tau, q, j_\tau}$ and all other entries being zero. 
For a matrix $\Z$ and a vector $\v$, let $P_\Z$ denote the projection of $\v$ to the space of $\Z$, i.e., $P_\Z(\v) = \Z (\Z^\top \Z)^{-1} \Z^\top  \v$. 

\todo[inline,color=gray!10]{
{\normalsize
\vspace{-0.5em}
\begin{restatable}[Behavior difference for parity]{thm}{parityrobust}\label{theorem:parity_decouple}
Assume the same condition as \Cref{theorem:opt_parity}. For $j\in \{1,2\}$,
Let $\theta_j$ denote the parameters of $g^*_j$. For $l \in [M]$, let $\xi_l$ be uniformly drawn from $\{\pm 1\}^{d}$, and $\Xi = {\sum_{l \in [M]} \xi_l \over M}$. 
Then, for any $\delta \in (0,1)$, with probability at least $1-\delta$ over the randomness of test data, we have
\begin{align}
    & g^*_j(\X_\tau, \y_\tau, \x_{\tau,q}) 
    = ~ h(\theta_j, 2 \gamma  \hat{\phi}_{\tau, q}  + P_{\D_j}(\Xi)) + \epsilon_j  \nonumber\\
    := & \sum_{i \in [m]} \a^*_i\act\left[ \diag\left(\V^{*,(i)} \right)^\top \left(2 \gamma  \hat{\phi}_{\tau, q}  + P_{\D_j}(\Xi)\right)   \right] {+} \epsilon_j \nonumber
\end{align}
where $\epsilon_j = O\left(\sqrt{{\nu_j\over M} \log{1\over \delta}}\right)$ and we have 
\begin{itemize}
    \item $2 \gamma  \hat{\phi}_{\tau, q}$ is the signal useful for prediction: $0 = $ $\ell(y_q \cdot h(\theta_1, 2 \gamma  \hat{\phi}_{\tau, q} )) =$ $\ell(y_q \cdot h(\theta_2, 2 \gamma  \hat{\phi}_{\tau, q} ))$. 
    \item  $P_{\D_1}(\Xi))$ and $P_{\D_2}(\Xi))$ is noise not related to labels, and ${\E[\|P_{\D_1}(\Xi))\|_2^2] \over \E[\|P_{\D_2}(\Xi))\|_2^2]} = {\nu_1+1 \over \nu_2+1}$.
\end{itemize}
\end{restatable}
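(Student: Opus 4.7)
The plan is to unfold the two-layer attention computation into a simple form of inner products between fixed head vectors and a single data-dependent vector, then separate this vector into a deterministic signal and a mean-zero fluctuation, and finally argue that the model only sees the fluctuation through its projection onto $\D_j$, whose statistics match those of the Rademacher surrogate $P_{\D_j}(\Xi)$.

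First I would substitute $\x_l = \G\phi_l$, $y_l = \phi_{l,i_\tau}\phi_{l,j_\tau}$, $\x_{\tau,q} = \G\phi_{\tau,q}$, and $\W^{*,(i)} = \G\V^{*,(i)}\G^\top$ into the model. Because $\G$ is orthonormal and $\V^{*,(i)}$ is diagonal, the $i$-th head reduces to $\a_i^*\,\sigma\!\left(d_i^\top (\mu\odot\phi_{\tau,q})\right)$ where $d_i := \diag(\V^{*,(i)})$ and $\mu := \frac{1}{M}\sum_{l\in[M]} y_l\phi_l$. A key observation is that $d_i$ is a column of $\D_j$, so $d_i^\top P_{\D_j}(v)=d_i^\top v$ for any $v$: every head activation already depends only on the $\D_j$-projection of its input. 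This lets me freely replace arguments by their $\D_j$-projections and reduces the theorem to matching the relevant one-dimensional statistics of $\mu\odot\phi_{\tau,q}$ with those of $2\gamma\hat{\phi}_{\tau,q}+P_{\D_j}(\Xi)$.

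Second, I would identify the signal by computing $\E[\mu\mid i_\tau,j_\tau]$ from the prescribed joint law of $(\phi_{i_\tau},\phi_{j_\tau})$. A direct calculation gives $\E[y\phi_{i_\tau}]=\E[\phi_{j_\tau}]=2\gamma$, $\E[y\phi_{j_\tau}]=2\gamma$, and $\E[y\phi_k]=0$ for $k\notin\{i_\tau,j_\tau\}$. Multiplying coordinatewise by $\phi_{\tau,q}$ yields $\E[\mu\odot\phi_{\tau,q}\mid\text{task},\phi_{\tau,q}]=2\gamma\hat{\phi}_{\tau,q}$. The first bullet then follows from \Cref{theorem:opt_parity}: the optimal $g^*_1$ solves every task in $S_1$ with margin at least one, so $\ell(y_q\cdot h(\theta_1,2\gamma\hat{\phi}_{\tau,q}))=0$; since $g^*_1$'s neurons form a subset of $g^*_2$'s, the signal-only prediction of $g^*_2$ is identical on these tasks, and the same vanishing hinge-loss bound holds.

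Third, I would control the deviation $\eta := \mu\odot\phi_{\tau,q} - 2\gamma\hat{\phi}_{\tau,q}$, written as $\eta = \frac{1}{M}\sum_l \tilde\xi_l$ with $\tilde\xi_l := (y_l\phi_l)\odot\phi_{\tau,q} - 2\gamma\hat{\phi}_{\tau,q}$ i.i.d., zero mean, and coordinatewise bounded. A Hoeffding bound on each scalar $d_i^\top\eta$ (and on $d_i^\top\Xi$) gives an $O(\sqrt{\log(1/\delta)/M})$ deviation per head; the same bound applies to $d_i^\top(\eta-\Xi)$ because $\tilde\xi_l-\xi_l$ is bounded and zero mean coordinatewise. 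Combining Lipschitzness of $\sigma$, $|\a_i^*|\le 1$, and a union bound over the $m=2(\nu_j+1)$ heads produces the stated $\epsilon_j = O(\sqrt{\nu_j/M\,\log(1/\delta)})$. For the second bullet, I would compute $\E[\|P_{\D_j}(\Xi)\|_2^2] = \tfrac{1}{M}\tr(P_{\D_j}) = \tfrac{\rank(\D_j)}{M}$ using $\E[\Xi\Xi^\top]=\tfrac{1}{M}I_d$; from \Cref{theorem:opt_parity} the columns of $\D_j$ form $\pm$-pairs spanning a $(\nu_j+1)$-dimensional subspace, giving the ratio $(\nu_1+1)/(\nu_2+1)$, and independence from labels is immediate since $\xi_l\perp (y_q,\text{task})$.

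The main obstacle I anticipate is step three: the fluctuation $\eta$ is \emph{not} truly Rademacher at coordinates $i_\tau,j_\tau$ (those entries are biased by $2\gamma$ in mean after removing the signal, and their variance differs from $1$), whereas $\Xi$ is. The matching with $P_{\D_j}(\Xi)$ must therefore be argued at the level of the one-dimensional head activations $d_i^\top(\cdot)$, using that the contribution from the two biased coordinates to $\|d_i\|_2^2$ is negligible relative to the $d-2$ clean coordinates when $d$ is large, so the per-head fluctuations of $d_i^\top\eta$ and $d_i^\top\Xi$ agree up to the target $O(\sqrt{\log(1/\delta)/M})$ accuracy that propagates to $\epsilon_j$.
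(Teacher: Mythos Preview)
Your overall architecture is the paper's: unfold each head to $\a_i^*\sigma\bigl(d_i^\top(\mu\odot\phi_{\tau,q})\bigr)$ with $d_i=\diag(\V^{*,(i)})$, split into signal $2\gamma\hat\phi_{\tau,q}$ plus fluctuation, use $d_i\in\mathrm{colspan}(\D_j)$ to insert $P_{\D_j}$, and read off the rank ratio for the second bullet. The first two steps and the bullets are fine.

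The gap is in step three, and it is not the one you flag. You treat $\Xi$ as a \emph{fresh} Rademacher average independent of the data and try to control $d_i^\top(\eta-\Xi)$ by Hoeffding. But then each summand $d_i^\top(\tilde\xi_l-\xi_l)$ is sub-Gaussian with parameter of order $\|d_i\|_2^2$, so the per-head deviation is $O\bigl(\|d_i\|_2\sqrt{\log(1/\delta)/M}\bigr)$, not $O\bigl(\sqrt{\log(1/\delta)/M}\bigr)$. Since $\|d_i\|_2=\Theta(\sqrt{d}/\gamma)$ for the negative heads and $\Theta(\nu_j\sqrt{d}/\gamma)$ for the positive ones, you pick up a $\sqrt{d}$ factor that cannot be removed and that destroys the claimed $\epsilon_j$. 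Your proposed fix (``the two biased coordinates contribute negligibly to $\|d_i\|_2^2$'') is aimed at the wrong place: the $\sqrt{d}$ comes from the $d-2$ \emph{clean} coordinates, where $\eta$ and an independent $\Xi$ simply have no reason to be close.

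The paper's way out is to \emph{couple} $\xi_l$ to the data rather than draw it independently. For every $k\notin\{i_\tau,j_\tau\}$, the variable $y_l\phi_{l,k}\phi_{\tau,q,k}$ is exactly uniform on $\{\pm1\}$ (because $\phi_{l,k}$ is uniform and independent of $y_l$ and $\phi_{\tau,q}$), so one sets $(\xi_l)_k:=y_l\phi_{l,k}\phi_{\tau,q,k}$ there and takes $(\xi_l)_{\{i_\tau,j_\tau\}}$ to be fresh Rademacher; marginally $\xi_l$ is still uniform on $\{\pm1\}^d$, but now $\mu\odot\phi_{\tau,q}$ and $\Xi$ agree \emph{exactly} on $\r_\tau=[d]\setminus\{i_\tau,j_\tau\}$. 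The only mismatch lives on the two task coordinates, where both $\Xi_{\t_\tau}$ and the centred $(\mu\odot\phi_{\tau,q})_{\t_\tau}-2\gamma\hat\phi_{\tau,q,\t_\tau}$ are individually $O\bigl(\sqrt{\log(1/\delta)/M}\bigr)$ by a scalar Chernoff/Hoeffding bound. Multiplying by the two relevant entries of $d_i$, summing over heads, and using Lipschitzness of $\sigma$ then gives $\epsilon_j$ without any $\sqrt{d}$. Once you adopt this coupling, your ``obstacle'' paragraph disappears: the two task coordinates are handled directly, and no variance comparison across coordinates is needed.
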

}}

\textbf{Implications.}
\Cref{theorem:parity_decouple} shows that during evaluation, we can decompose the input into two parts: signal and noise. 
Both the larger model and smaller model can capture the signal part well. 
However, the smaller model has a much smaller influence from noise than the larger model, i.e., the ratio is ${\nu_1+1 \over \nu_2+1}$.
The reason is that smaller models emphasize important hidden features while larger ones cover more hidden features, and thus, smaller models are more robust to noise while larger ones are easily distracted, leading to different ICL behaviors.
This again sheds light on where transformers pay attention to and how that affects ICL.

\begin{remark}
Here, we provide a detailed intuition about \Cref{theorem:parity_decouple}.
$\Xi$ is the input noise. When we only care about the noise part, we can rewrite the smaller model as $g_1 = h(\theta_1, P_{D_1}(\Xi))$, and the larger model as $g_2 = h(\theta_2, P_{D_2}(\Xi))$, where they share the same $h$ function. 
Our conclusion says that $E[\|P_{D_1}(\Xi)\|_2^2] / E[\|P_{D_2}(\Xi)\|_2^2] = (\nu_1 + 1) / (\nu_2 + 1)$, which means the smaller model's ``effect'' input noise is smaller than the larger model's ``effect'' input noise. 
Although their original input noise is the same, as the smaller model only focuses on limited features, the smaller model will ignore part of the noise, and the ``effect'' input noise is small. However, the larger model is the opposite.
\end{remark}

\section{Experiments}\label{sec:exp}
Brilliant recent work~\cite{wei2023larger} runs intensive and thorough experiments to show that larger language models do in-context learning differently.
Following their idea, we conduct similar experiments on binary classification datasets, which is consistent with our problem setting in the parity case, to support our theory statements.

\textbf{Experimental setup.} Following the experimental protocols in~\citet{wei2023larger,min2022rethinking}, we conduct experiments on five prevalent NLP tasks, leveraging datasets from GLUE \cite{wang-etal-2018-glue} tasks and Subj \cite{conneau-kiela-2018-senteval}. Our experiments utilize various sizes of the Llama model families \cite{touvron2023llama,touvron2023llama2}: 3B, 7B, 13B, 70B. We follow the prior work on in-context learning \cite{wei2023larger} and use $M = 16$ in-context exemplars. We aim to assess the models' ability to use inherent semantic biases from pretraining when facing in-context examples. As part of this experiment, we introduce noise by inverting an escalating percentage of in-context example labels. To illustrate, a 100\% label inversion for the SST-2 dataset implies that every ``positive'' exemplar is now labeled ``negative''. Note that while we manipulate the in-context example labels, the evaluation sample labels remain consistent. We use the same templates as~\cite{min2021metaicl}, a sample evaluation for SST-2 when $M=2$:
{\small \begin{verbatim}
sentence: show us a good time  
The answer is positive. 
 
sentence: as dumb and cheesy  
The answer is negative.
   
sentence: it 's a charming and often 
affecting journey
The answer is
\end{verbatim}
}

\subsection{Behavior Difference}
\Cref{fig:nlp_in_context} shows the result of model performance (chat/with instruct turning) across all datasets with respect to the proportion of labels that are flipped. When 0\% label flips, we observe that larger language models have better in-context abilities. On the other hand, the performance decrease facing noise is more significant for larger models. As the percentage of label alterations increases, which can be viewed as increasing label noise $\sigma^2$, the performance of small models remains flat and seldom is worse than random guessing while large models are easily affected by the noise, as predicted by our analysis.
These results indicate that large models can override their pretraining biases in-context input-label correlations, while small models may not and are more robust to noise. 
This observation aligns with the findings in \citet{wei2023larger} and our analysis.

We can see a similar or even stronger phenomenon in \Cref{fig:nlp_in_context_nonchat}: larger models are more easily affected by noise (flipped labels) and override pretrained biases than smaller models for the original/without instruct turning version (see the ``Average'' sub-figure). 
On the one hand, we conclude that both large base models and large chat models suffer from ICL robustness issues.
On the other hand, this is also consistent with recent work suggesting that instruction tuning will impair LLM's in-context learning capability.

\subsection{Ablation Study}
To further verify our analysis, we provide an ablation study. We concatenate an irrelevant sentence from GSM-IC \citep{shi2023large} to an input-label pair sentence from SST-2 in GLUE dataset. We use ``correct'' to denote the original label and ``wrong'' to denote the flipped label. Then, we measure the magnitude of correlation between label-input, by computing the norm of the last row of attention maps across all heads in the final layer. We do this between ``correct''/``wrong'' labels and the original/irrelevant inserted sentences.  
\Cref{fig:norm_comparison} shows the results on 100 evaluation prompts; for example, the subfigure Correct+Relevant shows the correlation magnitude between the ``correct'' label and the original input sentence in each prompt. 
The results show that the small model Llama 2-13b mainly focuses on the relevant part (original input) and may ignore the irrelevant sentence, while the large model Llama 2-70b focuses on both sentences. This well aligns with our analysis. 

\section{More Discussions about Noise}
There are three kinds of noise covered in our analysis: 

{\bf Pretraining noise.} We can see it as toxic or harmful pretraining data on the website (noisy training data). The model will learn these features and patterns. It is covered by $\xi$ in the linear regression case and $S_2$ in the parity case.
 
{\bf Input noise during inference.} We can see it as natural noise as the user's wrong spelling or biased sampling. It is a finite sampling error as $x$ drawn from the Gaussian distribution for the linear regression case and a finite sampling error as $x$ drawn from a uniform distribution for the parity case.

{\bf Label noise during inference.} We can see it as adversarial examples, or misleading instructions, e.g., deliberately letting a model generate a wrong fact conclusion or harmful solution, e.g., poison making. It is $\sigma$ in the linear regression case and $S_2$ in the parity case.

For pretraining noise, it will induce the model to learn noisy or harmful features. During inference, for input noise and label noise, the larger model will pay additional attention to these noisy or harmful features in the input and label pair, i.e., $y \cdot x$, so that the input and label noise may cause a large perturbation in the final results. If there is no pretraining noise, then the larger model will have as good robustness as the smaller model. Also, if there is no input and label noise, the larger model will have as good robustness as the smaller model. The robustness gap only happens when both pretraining noise and inference noise exist simultaneously.
\section{Conclusion}
In this work, we answered our research question: why do larger language models do in-context learning differently? Our theoretical study showed that smaller models emphasize important hidden features while larger ones cover more hidden features, and thus the former are more robust to noise while the latter are more easily distracted, leading to different behaviors during in-context learning. Our empirical results provided positive support for the theoretical analysis. Our findings can help improve understanding of LLMs and ICL, and better training and application of these models.

\section*{Acknowledgements}
The work is partially supported by Air Force Grant FA9550-18-1-0166, the National Science Foundation (NSF) Grants 2008559-IIS, 2023239-DMS, and CCF-2046710.

\section*{Impact Statement}
Our work aims to improve the understanding of the in-context learning mechanism and to inspire efficient and safe use of ICL. Our paper is purely theoretical and empirical in nature and thus we foresee no immediate negative ethical impact.
We hope our work will inspire effective algorithm design and promote a better understanding of large language model learning mechanisms.


\bibliography{ref}

\begin{thebibliography}{104}
\providecommand{\natexlab}[1]{#1}
\providecommand{\url}[1]{\texttt{#1}}
\expandafter\ifx\csname urlstyle\endcsname\relax
  \providecommand{\doi}[1]{doi: #1}\else
  \providecommand{\doi}{doi: \begingroup \urlstyle{rm}\Url}\fi

\bibitem[Agrawal et~al.(2022)Agrawal, Zhou, Lewis, Zettlemoyer, and
  Ghazvininejad]{agrawal2022context}
Agrawal, S., Zhou, C., Lewis, M., Zettlemoyer, L., and Ghazvininejad, M.
\newblock In-context examples selection for machine translation.
\newblock \emph{arXiv preprint arXiv:2212.02437}, 2022.

\bibitem[Ahn et~al.(2023)Ahn, Cheng, Daneshmand, and Sra]{ahn2023transformers}
Ahn, K., Cheng, X., Daneshmand, H., and Sra, S.
\newblock Transformers learn to implement preconditioned gradient descent for
  in-context learning.
\newblock In \emph{Thirty-seventh Conference on Neural Information Processing
  Systems}, 2023.

\bibitem[Ahn et~al.(2024)Ahn, Cheng, Song, Yun, Jadbabaie, and
  Sra]{ahn2023linear}
Ahn, K., Cheng, X., Song, M., Yun, C., Jadbabaie, A., and Sra, S.
\newblock Linear attention is (maybe) all you need (to understand transformer
  optimization).
\newblock In \emph{The Twelfth International Conference on Learning
  Representations}, 2024.

\bibitem[Akyurek et~al.(2023)Akyurek, Schuurmans, Andreas, Ma, and
  Zhou]{akyurek2023what}
Akyurek, E., Schuurmans, D., Andreas, J., Ma, T., and Zhou, D.
\newblock What learning algorithm is in-context learning? investigations with
  linear models.
\newblock In \emph{The Eleventh International Conference on Learning
  Representations}, 2023.

\bibitem[Allen-Zhu \& Li(2023)Allen-Zhu and Li]{allen2023physics}
Allen-Zhu, Z. and Li, Y.
\newblock Physics of language models: Part 1, context-free grammar.
\newblock \emph{arXiv preprint arXiv:2305.13673}, 2023.

\bibitem[An et~al.(2023)An, Lin, Fu, Chen, Zheng, Lou, and
  Zhang]{an2023context}
An, S., Lin, Z., Fu, Q., Chen, B., Zheng, N., Lou, J.-G., and Zhang, D.
\newblock How do in-context examples affect compositional generalization?
\newblock \emph{arXiv preprint arXiv:2305.04835}, 2023.

\bibitem[Arora \& Goyal(2023)Arora and Goyal]{arora2023theory}
Arora, S. and Goyal, A.
\newblock A theory for emergence of complex skills in language models.
\newblock \emph{arXiv preprint arXiv:2307.15936}, 2023.

\bibitem[Bai et~al.(2023)Bai, Chen, Wang, Xiong, and Mei]{bai2023transformers}
Bai, Y., Chen, F., Wang, H., Xiong, C., and Mei, S.
\newblock Transformers as statisticians: Provable in-context learning with
  in-context algorithm selection.
\newblock In \emph{Thirty-seventh Conference on Neural Information Processing
  Systems}, 2023.

\bibitem[Barak et~al.(2022)Barak, Edelman, Goel, Kakade, Malach, and
  Zhang]{beg+22}
Barak, B., Edelman, B., Goel, S., Kakade, S., Malach, E., and Zhang, C.
\newblock Hidden progress in deep learning: Sgd learns parities near the
  computational limit.
\newblock \emph{Advances in Neural Information Processing Systems},
  35:\penalty0 21750--21764, 2022.

\bibitem[Bhojanapalli et~al.(2020)Bhojanapalli, Yun, Rawat, Reddi, and
  Kumar]{bhojanapalli2020low}
Bhojanapalli, S., Yun, C., Rawat, A.~S., Reddi, S., and Kumar, S.
\newblock Low-rank bottleneck in multi-head attention models.
\newblock In \emph{International conference on machine learning}. PMLR, 2020.

\bibitem[Bietti et~al.(2023)Bietti, Cabannes, Bouchacourt, Jegou, and
  Bottou]{bietti2023birth}
Bietti, A., Cabannes, V., Bouchacourt, D., Jegou, H., and Bottou, L.
\newblock Birth of a transformer: A memory viewpoint.
\newblock In \emph{Thirty-seventh Conference on Neural Information Processing
  Systems}, 2023.

\bibitem[Blei et~al.(2003)Blei, Ng, and Jordan]{blei2003latent}
Blei, D.~M., Ng, A.~Y., and Jordan, M.~I.
\newblock Latent dirichlet allocation.
\newblock \emph{the Journal of machine Learning research}, 3:\penalty0
  993--1022, 2003.

\bibitem[Bommasani et~al.(2021)Bommasani, Hudson, Adeli, Altman, Arora, von
  Arx, Bernstein, Bohg, Bosselut, Brunskill,
  et~al.]{bommasani2021opportunities}
Bommasani, R., Hudson, D.~A., Adeli, E., Altman, R., Arora, S., von Arx, S.,
  Bernstein, M.~S., Bohg, J., Bosselut, A., Brunskill, E., et~al.
\newblock On the opportunities and risks of foundation models.
\newblock \emph{arXiv preprint arXiv:2108.07258}, 2021.

\bibitem[Brown et~al.(2020)Brown, Mann, Ryder, Subbiah, Kaplan, Dhariwal,
  Neelakantan, Shyam, Sastry, Askell, et~al.]{brown2020language}
Brown, T., Mann, B., Ryder, N., Subbiah, M., Kaplan, J.~D., Dhariwal, P.,
  Neelakantan, A., Shyam, P., Sastry, G., Askell, A., et~al.
\newblock Language models are few-shot learners.
\newblock \emph{Advances in neural information processing systems}, 2020.

\bibitem[Bubeck et~al.(2023)Bubeck, Chandrasekaran, Eldan, Gehrke, Horvitz,
  Kamar, Lee, Lee, Li, Lundberg, et~al.]{bubeck2023sparks}
Bubeck, S., Chandrasekaran, V., Eldan, R., Gehrke, J., Horvitz, E., Kamar, E.,
  Lee, P., Lee, Y.~T., Li, Y., Lundberg, S., et~al.
\newblock Sparks of artificial general intelligence: Early experiments with
  gpt-4.
\newblock \emph{arXiv preprint arXiv:2303.12712}, 2023.

\bibitem[Chen et~al.(2021)Chen, Dao, Winsor, Song, Rudra, and
  R{\'e}]{chen2021scatterbrain}
Chen, B., Dao, T., Winsor, E., Song, Z., Rudra, A., and R{\'e}, C.
\newblock Scatterbrain: Unifying sparse and low-rank attention.
\newblock \emph{Advances in Neural Information Processing Systems}, 2021.

\bibitem[Cheng et~al.(2023)Cheng, Chen, and Sra]{cheng2023transformers}
Cheng, X., Chen, Y., and Sra, S.
\newblock Transformers implement functional gradient descent to learn
  non-linear functions in context.
\newblock \emph{arXiv preprint arXiv:2312.06528}, 2023.

\bibitem[Chowdhery et~al.(2022)Chowdhery, Narang, Devlin, Bosma, Mishra,
  Roberts, Barham, Chung, Sutton, Gehrmann, et~al.]{chowdhery2022palm}
Chowdhery, A., Narang, S., Devlin, J., Bosma, M., Mishra, G., Roberts, A.,
  Barham, P., Chung, H.~W., Sutton, C., Gehrmann, S., et~al.
\newblock Palm: Scaling language modeling with pathways.
\newblock \emph{arXiv preprint arXiv:2204.02311}, 2022.

\bibitem[Chung et~al.(2022)Chung, Hou, Longpre, Zoph, Tay, Fedus, Li, Wang,
  Dehghani, Brahma, et~al.]{chung2022scaling}
Chung, H.~W., Hou, L., Longpre, S., Zoph, B., Tay, Y., Fedus, W., Li, E., Wang,
  X., Dehghani, M., Brahma, S., et~al.
\newblock Scaling instruction-finetuned language models.
\newblock \emph{arXiv preprint arXiv:2210.11416}, 2022.

\bibitem[Conneau \& Kiela(2018)Conneau and Kiela]{conneau-kiela-2018-senteval}
Conneau, A. and Kiela, D.
\newblock {S}ent{E}val: An evaluation toolkit for universal sentence
  representations.
\newblock In \emph{Proceedings of the Eleventh International Conference on
  Language Resources and Evaluation ({LREC} 2018)}. European Language Resources
  Association (ELRA), 2018.

\bibitem[Dai et~al.(2022)Dai, Sun, Dong, Hao, Sui, and Wei]{dai2022can}
Dai, D., Sun, Y., Dong, L., Hao, Y., Sui, Z., and Wei, F.
\newblock Why can gpt learn in-context? language models secretly perform
  gradient descent as meta optimizers.
\newblock \emph{arXiv preprint arXiv:2212.10559}, 2022.

\bibitem[Daniely \& Malach(2020)Daniely and Malach]{dm20}
Daniely, A. and Malach, E.
\newblock Learning parities with neural networks.
\newblock \emph{Advances in Neural Information Processing Systems},
  33:\penalty0 20356--20365, 2020.

\bibitem[Dass et~al.(2023)Dass, Wu, Shi, Li, Ye, Wang, and
  Lin]{dass2023vitality}
Dass, J., Wu, S., Shi, H., Li, C., Ye, Z., Wang, Z., and Lin, Y.
\newblock Vitality: Unifying low-rank and sparse approximation for vision
  transformer acceleration with a linear taylor attention.
\newblock In \emph{2023 IEEE International Symposium on High-Performance
  Computer Architecture (HPCA)}. IEEE, 2023.

\bibitem[Devlin et~al.(2019)Devlin, Chang, Lee, and Toutanova]{devlin2018bert}
Devlin, J., Chang, M.-W., Lee, K., and Toutanova, K.
\newblock {BERT}: Pre-training of deep bidirectional transformers for language
  understanding.
\newblock In \emph{Proceedings of the 2019 Conference of the North {A}merican
  Chapter of the Association for Computational Linguistics: Human Language
  Technologies}. Association for Computational Linguistics, 2019.

\bibitem[Dong et~al.(2022)Dong, Li, Dai, Zheng, Wu, Chang, Sun, Xu, and
  Sui]{dong2022survey}
Dong, Q., Li, L., Dai, D., Zheng, C., Wu, Z., Chang, B., Sun, X., Xu, J., and
  Sui, Z.
\newblock A survey for in-context learning.
\newblock \emph{arXiv preprint arXiv:2301.00234}, 2022.

\bibitem[Fan et~al.(2021)Fan, Liu, Lian, Zhao, Xie, and Wen]{fan2021lighter}
Fan, X., Liu, Z., Lian, J., Zhao, W.~X., Xie, X., and Wen, J.-R.
\newblock Lighter and better: low-rank decomposed self-attention networks for
  next-item recommendation.
\newblock In \emph{Proceedings of the 44th international ACM SIGIR conference
  on research and development in information retrieval}, 2021.

\bibitem[Fu et~al.(2023)Fu, Peng, Khot, and Lapata]{fu2023improving}
Fu, Y., Peng, H., Khot, T., and Lapata, M.
\newblock Improving language model negotiation with self-play and in-context
  learning from ai feedback.
\newblock \emph{arXiv preprint arXiv:2305.10142}, 2023.

\bibitem[Gao et~al.(2023)Gao, Han, Zhang, Lin, Geng, Zhou, Zhang, Lu, He, Yue,
  et~al.]{gao2023llama}
Gao, P., Han, J., Zhang, R., Lin, Z., Geng, S., Zhou, A., Zhang, W., Lu, P.,
  He, C., Yue, X., et~al.
\newblock Llama-adapter v2: Parameter-efficient visual instruction model.
\newblock \emph{arXiv preprint arXiv:2304.15010}, 2023.

\bibitem[Gao et~al.(2021{\natexlab{a}})Gao, Fisch, and Chen]{gao2020making}
Gao, T., Fisch, A., and Chen, D.
\newblock Making pre-trained language models better few-shot learners.
\newblock In \emph{Proceedings of the 59th Annual Meeting of the Association
  for Computational Linguistics and the 11th International Joint Conference on
  Natural Language Processing}, 2021{\natexlab{a}}.

\bibitem[Gao et~al.(2021{\natexlab{b}})Gao, Fisch, and Chen]{gao2021making}
Gao, T., Fisch, A., and Chen, D.
\newblock Making pre-trained language models better few-shot learners.
\newblock In \emph{Proceedings of the 59th Annual Meeting of the Association
  for Computational Linguistics and the 11th International Joint Conference on
  Natural Language Processing}, 2021{\natexlab{b}}.

\bibitem[Garg et~al.(2022)Garg, Tsipras, Liang, and Valiant]{garg2022can}
Garg, S., Tsipras, D., Liang, P.~S., and Valiant, G.
\newblock What can transformers learn in-context? a case study of simple
  function classes.
\newblock \emph{Advances in Neural Information Processing Systems}, 2022.

\bibitem[Geva et~al.(2021)Geva, Schuster, Berant, and
  Levy]{geva2021transformer}
Geva, M., Schuster, R., Berant, J., and Levy, O.
\newblock Transformer feed-forward layers are key-value memories.
\newblock In \emph{Proceedings of the 2021 Conference on Empirical Methods in
  Natural Language Processing}, pp.\  5484--5495, 2021.

\bibitem[Gu et~al.(2024{\natexlab{a}})Gu, Li, Liang, Shi, and Song]{gll+24b}
Gu, J., Li, C., Liang, Y., Shi, Z., and Song, Z.
\newblock Exploring the frontiers of softmax: Provable optimization,
  applications in diffusion model, and beyond.
\newblock \emph{arXiv preprint arXiv:2405.03251}, 2024{\natexlab{a}}.

\bibitem[Gu et~al.(2024{\natexlab{b}})Gu, Li, Liang, Shi, Song, and
  Zhou]{gll+24a}
Gu, J., Li, C., Liang, Y., Shi, Z., Song, Z., and Zhou, T.
\newblock Fourier circuits in neural networks: Unlocking the potential of large
  language models in mathematical reasoning and modular arithmetic.
\newblock \emph{arXiv preprint arXiv:2402.09469}, 2024{\natexlab{b}}.

\bibitem[Gu et~al.(2024{\natexlab{c}})Gu, Liang, Liu, Shi, Song, and
  Yin]{gll+24c}
Gu, J., Liang, Y., Liu, H., Shi, Z., Song, Z., and Yin, J.
\newblock Conv-basis: A new paradigm for efficient attention inference and
  gradient computation in transformers.
\newblock \emph{arXiv preprint arXiv:2405.05219}, 2024{\natexlab{c}}.

\bibitem[Gu et~al.(2024{\natexlab{d}})Gu, Liang, Shi, Song, and Zhou]{gls+24a}
Gu, J., Liang, Y., Shi, Z., Song, Z., and Zhou, Y.
\newblock Tensor attention training: Provably efficient learning of
  higher-order transformers.
\newblock \emph{arXiv preprint arXiv:2405.16411}, 2024{\natexlab{d}}.

\bibitem[Gu et~al.(2024{\natexlab{e}})Gu, Liang, Shi, Song, and Zhou]{gls+24b}
Gu, J., Liang, Y., Shi, Z., Song, Z., and Zhou, Y.
\newblock Unraveling the smoothness properties of diffusion models: A gaussian
  mixture perspective.
\newblock \emph{arXiv preprint arXiv:2405.16418}, 2024{\natexlab{e}}.

\bibitem[Guo et~al.(2024)Guo, Hu, Mei, Wang, Xiong, Savarese, and
  Bai]{guo2023transformers}
Guo, T., Hu, W., Mei, S., Wang, H., Xiong, C., Savarese, S., and Bai, Y.
\newblock How do transformers learn in-context beyond simple functions? a case
  study on learning with representations.
\newblock In \emph{The Twelfth International Conference on Learning
  Representations}, 2024.

\bibitem[Hu et~al.(2022)Hu, yelong shen, Wallis, Allen-Zhu, Li, Wang, Wang, and
  Chen]{hu2021lora}
Hu, E.~J., yelong shen, Wallis, P., Allen-Zhu, Z., Li, Y., Wang, S., Wang, L.,
  and Chen, W.
\newblock Lo{RA}: Low-rank adaptation of large language models.
\newblock In \emph{International Conference on Learning Representations}, 2022.

\bibitem[Huang et~al.(2023)Huang, Cheng, and Liang]{huang2023context}
Huang, Y., Cheng, Y., and Liang, Y.
\newblock In-context convergence of transformers.
\newblock \emph{arXiv preprint arXiv:2310.05249}, 2023.

\bibitem[Iyer et~al.(2022)Iyer, Lin, Pasunuru, Mihaylov, Simig, Yu, Shuster,
  Wang, Liu, Koura, et~al.]{iyer2022opt}
Iyer, S., Lin, X.~V., Pasunuru, R., Mihaylov, T., Simig, D., Yu, P., Shuster,
  K., Wang, T., Liu, Q., Koura, P.~S., et~al.
\newblock Opt-iml: Scaling language model instruction meta learning through the
  lens of generalization.
\newblock \emph{arXiv preprint arXiv:2212.12017}, 2022.

\bibitem[Jelassi et~al.(2022)Jelassi, Sander, and Li]{jelassi2022vision}
Jelassi, S., Sander, M., and Li, Y.
\newblock Vision transformers provably learn spatial structure.
\newblock \emph{Advances in Neural Information Processing Systems}, 2022.

\bibitem[Khattab et~al.(2022)Khattab, Santhanam, Li, Hall, Liang, Potts, and
  Zaharia]{khattab2022demonstrate}
Khattab, O., Santhanam, K., Li, X.~L., Hall, D., Liang, P., Potts, C., and
  Zaharia, M.
\newblock Demonstrate-search-predict: Composing retrieval and language models
  for knowledge-intensive nlp.
\newblock \emph{arXiv preprint arXiv:2212.14024}, 2022.

\bibitem[Lester et~al.(2021)Lester, Al-Rfou, and Constant]{lester2021power}
Lester, B., Al-Rfou, R., and Constant, N.
\newblock The power of scale for parameter-efficient prompt tuning.
\newblock In \emph{Proceedings of the 2021 Conference on Empirical Methods in
  Natural Language Processing}. Association for Computational Linguistics,
  2021.

\bibitem[Li et~al.(2023{\natexlab{a}})Li, Wang, Liu, and Chen]{li2023a}
Li, H., Wang, M., Liu, S., and Chen, P.-Y.
\newblock A theoretical understanding of shallow vision transformers: Learning,
  generalization, and sample complexity.
\newblock In \emph{The Eleventh International Conference on Learning
  Representations}, 2023{\natexlab{a}}.

\bibitem[Li et~al.(2023{\natexlab{b}})Li, Wang, Lu, Wan, Cui, and
  Chen]{li2023transformers}
Li, H., Wang, M., Lu, S., Wan, H., Cui, X., and Chen, P.-Y.
\newblock Transformers as multi-task feature selectors: Generalization analysis
  of in-context learning.
\newblock In \emph{NeurIPS 2023 Workshop on Mathematics of Modern Machine
  Learning}, 2023{\natexlab{b}}.

\bibitem[Li \& Liang(2021)Li and Liang]{li2021prefix}
Li, X.~L. and Liang, P.
\newblock Prefix-tuning: Optimizing continuous prompts for generation.
\newblock In \emph{Proceedings of the 59th Annual Meeting of the Association
  for Computational Linguistics and the 11th International Joint Conference on
  Natural Language Processing}. Association for Computational Linguistics,
  2021.

\bibitem[Li et~al.(2023{\natexlab{c}})Li, Ildiz, Papailiopoulos, and
  Oymak]{pmlr-v202-li23l}
Li, Y., Ildiz, M.~E., Papailiopoulos, D., and Oymak, S.
\newblock Transformers as algorithms: Generalization and stability in
  in-context learning.
\newblock In \emph{Proceedings of the 40th International Conference on Machine
  Learning}, Proceedings of Machine Learning Research. PMLR,
  2023{\natexlab{c}}.

\bibitem[Li et~al.(2023{\natexlab{d}})Li, Li, and Risteski]{pmlr-v202-li23p}
Li, Y., Li, Y., and Risteski, A.
\newblock How do transformers learn topic structure: Towards a mechanistic
  understanding.
\newblock In \emph{Proceedings of the 40th International Conference on Machine
  Learning}, Proceedings of Machine Learning Research. PMLR,
  2023{\natexlab{d}}.

\bibitem[Li et~al.(2023{\natexlab{e}})Li, Sreenivasan, Giannou, Papailiopoulos,
  and Oymak]{li2023dissecting}
Li, Y., Sreenivasan, K., Giannou, A., Papailiopoulos, D., and Oymak, S.
\newblock Dissecting chain-of-thought: Compositionality through in-context
  filtering and learning.
\newblock In \emph{Thirty-seventh Conference on Neural Information Processing
  Systems}, 2023{\natexlab{e}}.

\bibitem[Luo et~al.(2023)Luo, Wu, Weng, Zhou, and Ge]{luo2023understanding}
Luo, Z., Wu, S., Weng, C., Zhou, M., and Ge, R.
\newblock Understanding the robustness of self-supervised learning through
  topic modeling.
\newblock In \emph{The Eleventh International Conference on Learning
  Representations}, 2023.

\bibitem[Mahankali et~al.(2023)Mahankali, Hashimoto, and Ma]{mahankali2023one}
Mahankali, A., Hashimoto, T.~B., and Ma, T.
\newblock One step of gradient descent is provably the optimal in-context
  learner with one layer of linear self-attention.
\newblock \emph{arXiv preprint arXiv:2307.03576}, 2023.

\bibitem[Malladi et~al.(2023)Malladi, Gao, Nichani, Damian, Lee, Chen, and
  Arora]{malladi2023fine}
Malladi, S., Gao, T., Nichani, E., Damian, A., Lee, J.~D., Chen, D., and Arora,
  S.
\newblock Fine-tuning language models with just forward passes.
\newblock \emph{Advances in Neural Information Processing Systems}, 2023.

\bibitem[Michalowicz et~al.(2009)Michalowicz, Nichols, Bucholtz, and
  Olson]{michalowicz2009isserlis}
Michalowicz, J., Nichols, J., Bucholtz, F., and Olson, C.
\newblock An isserlis’ theorem for mixed gaussian variables: Application to
  the auto-bispectral density.
\newblock \emph{Journal of Statistical Physics}, 2009.

\bibitem[Min et~al.(2021)Min, Lewis, Zettlemoyer, and
  Hajishirzi]{min2021metaicl}
Min, S., Lewis, M., Zettlemoyer, L., and Hajishirzi, H.
\newblock Metaicl: Learning to learn in context.
\newblock \emph{arXiv preprint arXiv:2110.15943}, 2021.

\bibitem[Min et~al.(2022)Min, Lyu, Holtzman, Artetxe, Lewis, Hajishirzi, and
  Zettlemoyer]{min2022rethinking}
Min, S., Lyu, X., Holtzman, A., Artetxe, M., Lewis, M., Hajishirzi, H., and
  Zettlemoyer, L.
\newblock Rethinking the role of demonstrations: What makes in-context learning
  work?
\newblock In \emph{Proceedings of the 2022 Conference on Empirical Methods in
  Natural Language Processing}, 2022.

\bibitem[Mishra et~al.(2022)Mishra, Khashabi, Baral, and
  Hajishirzi]{mishra2022cross}
Mishra, S., Khashabi, D., Baral, C., and Hajishirzi, H.
\newblock Cross-task generalization via natural language crowdsourcing
  instructions.
\newblock In \emph{Proceedings of the 60th Annual Meeting of the Association
  for Computational Linguistics}, 2022.

\bibitem[Nye et~al.(2021)Nye, Andreassen, Gur-Ari, Michalewski, Austin, Bieber,
  Dohan, Lewkowycz, Bosma, Luan, et~al.]{nye2021show}
Nye, M., Andreassen, A.~J., Gur-Ari, G., Michalewski, H., Austin, J., Bieber,
  D., Dohan, D., Lewkowycz, A., Bosma, M., Luan, D., et~al.
\newblock Show your work: Scratchpads for intermediate computation with
  language models.
\newblock \emph{arXiv preprint arXiv:2112.00114}, 2021.

\bibitem[Olshausen \& Field(1997)Olshausen and Field]{Olshausen1997SparseCW}
Olshausen, B. and Field, D.
\newblock Sparse coding with an overcomplete basis set: A strategy employed by
  v1?
\newblock \emph{Vision Research}, 37:\penalty0 3311--3325, 1997.

\bibitem[OpenAI(2022)]{chatgpt}
OpenAI.
\newblock Introducing {ChatGPT}.
\newblock \url{https://openai.com/blog/chatgpt}, 2022.
\newblock Accessed: 2023-09-10.

\bibitem[OpenAI(2023)]{openai2023gpt4}
OpenAI.
\newblock {GPT}-4 technical report.
\newblock \emph{arXiv preprint arxiv:2303.08774}, 2023.

\bibitem[Ouyang et~al.(2022)Ouyang, Wu, Jiang, Almeida, Wainwright, Mishkin,
  Zhang, Agarwal, Slama, Ray, et~al.]{ouyang2022training}
Ouyang, L., Wu, J., Jiang, X., Almeida, D., Wainwright, C., Mishkin, P., Zhang,
  C., Agarwal, S., Slama, K., Ray, A., et~al.
\newblock Training language models to follow instructions with human feedback.
\newblock \emph{Advances in Neural Information Processing Systems}, 2022.

\bibitem[Pan et~al.(2023)Pan, Gao, Chen, and Chen]{pan2023what}
Pan, J., Gao, T., Chen, H., and Chen, D.
\newblock What in-context learning 'learns' in-context: Disentangling task
  recognition and task learning.
\newblock In \emph{Findings of Association for Computational Linguistics
  (ACL)}, 2023.

\bibitem[Panigrahi et~al.(2023)Panigrahi, Malladi, Xia, and
  Arora]{panigrahi2023trainable}
Panigrahi, A., Malladi, S., Xia, M., and Arora, S.
\newblock Trainable transformer in transformer.
\newblock \emph{arXiv preprint arXiv:2307.01189}, 2023.

\bibitem[Pourreza \& Rafiei(2023)Pourreza and Rafiei]{pourreza2023din}
Pourreza, M. and Rafiei, D.
\newblock Din-sql: Decomposed in-context learning of text-to-sql with
  self-correction.
\newblock \emph{arXiv preprint arXiv:2304.11015}, 2023.

\bibitem[Raventos et~al.(2023)Raventos, Paul, Chen, and
  Ganguli]{raventos2023pretraining}
Raventos, A., Paul, M., Chen, F., and Ganguli, S.
\newblock Pretraining task diversity and the emergence of non-bayesian
  in-context learning for regression.
\newblock In \emph{Thirty-seventh Conference on Neural Information Processing
  Systems}, 2023.

\bibitem[Reddy(2024)]{reddy2023mechanistic}
Reddy, G.
\newblock The mechanistic basis of data dependence and abrupt learning in an
  in-context classification task.
\newblock In \emph{The Twelfth International Conference on Learning
  Representations}, 2024.

\bibitem[Schlag et~al.(2021)Schlag, Irie, and Schmidhuber]{schlag2021linear}
Schlag, I., Irie, K., and Schmidhuber, J.
\newblock Linear transformers are secretly fast weight programmers.
\newblock In \emph{International Conference on Machine Learning}. PMLR, 2021.

\bibitem[Shi et~al.(2023{\natexlab{a}})Shi, Chen, Misra, Scales, Dohan, Chi,
  Sch{\"a}rli, and Zhou]{shi2023large}
Shi, F., Chen, X., Misra, K., Scales, N., Dohan, D., Chi, E.~H., Sch{\"a}rli,
  N., and Zhou, D.
\newblock Large language models can be easily distracted by irrelevant context.
\newblock In \emph{International Conference on Machine Learning}. PMLR,
  2023{\natexlab{a}}.

\bibitem[Shi et~al.(2022)Shi, Wei, and Liang]{swl22}
Shi, Z., Wei, J., and Liang, Y.
\newblock A theoretical analysis on feature learning in neural networks:
  Emergence from inputs and advantage over fixed features.
\newblock In \emph{International Conference on Learning Representations}, 2022.

\bibitem[Shi et~al.(2023{\natexlab{b}})Shi, Chen, Li, Raghuram, Wu, Liang, and
  Jha]{shi2023the}
Shi, Z., Chen, J., Li, K., Raghuram, J., Wu, X., Liang, Y., and Jha, S.
\newblock The trade-off between universality and label efficiency of
  representations from contrastive learning.
\newblock In \emph{The Eleventh International Conference on Learning
  Representations}, 2023{\natexlab{b}}.

\bibitem[Shi et~al.(2023{\natexlab{c}})Shi, Ming, Fan, Sala, and
  Liang]{shi2023domain}
Shi, Z., Ming, Y., Fan, Y., Sala, F., and Liang, Y.
\newblock Domain generalization via nuclear norm regularization.
\newblock In \emph{Conference on Parsimony and Learning (Proceedings Track)},
  2023{\natexlab{c}}.

\bibitem[Shi et~al.(2023{\natexlab{d}})Shi, Wei, and Liang]{shi2023provable}
Shi, Z., Wei, J., and Liang, Y.
\newblock Provable guarantees for neural networks via gradient feature
  learning.
\newblock In \emph{Thirty-seventh Conference on Neural Information Processing
  Systems}, 2023{\natexlab{d}}.

\bibitem[Sun et~al.(2022)Sun, Shao, Qian, Huang, and Qiu]{sun2022black}
Sun, T., Shao, Y., Qian, H., Huang, X., and Qiu, X.
\newblock Black-box tuning for language-model-as-a-service.
\newblock In \emph{International Conference on Machine Learning}. PMLR, 2022.

\bibitem[Tian et~al.(2023{\natexlab{a}})Tian, Wang, Chen, and Du]{tian2023scan}
Tian, Y., Wang, Y., Chen, B., and Du, S.
\newblock Scan and snap: Understanding training dynamics and token composition
  in 1-layer transformer.
\newblock \emph{Advances in Neural Information Processing Systems},
  2023{\natexlab{a}}.

\bibitem[Tian et~al.(2023{\natexlab{b}})Tian, Wang, Zhang, Chen, and
  Du]{tian2023joma}
Tian, Y., Wang, Y., Zhang, Z., Chen, B., and Du, S.
\newblock Joma: Demystifying multilayer transformers via joint dynamics of mlp
  and attention, 2023{\natexlab{b}}.

\bibitem[Touvron et~al.(2023{\natexlab{a}})Touvron, Lavril, Izacard, Martinet,
  Lachaux, Lacroix, Rozi{\`e}re, Goyal, Hambro, Azhar,
  et~al.]{touvron2023llama}
Touvron, H., Lavril, T., Izacard, G., Martinet, X., Lachaux, M.-A., Lacroix,
  T., Rozi{\`e}re, B., Goyal, N., Hambro, E., Azhar, F., et~al.
\newblock Llama: Open and efficient foundation language models.
\newblock \emph{arXiv preprint arXiv:2302.13971}, 2023{\natexlab{a}}.

\bibitem[Touvron et~al.(2023{\natexlab{b}})Touvron, Martin, Stone, Albert,
  Almahairi, Babaei, Bashlykov, Batra, Bhargava, Bhosale,
  et~al.]{touvron2023llama2}
Touvron, H., Martin, L., Stone, K., Albert, P., Almahairi, A., Babaei, Y.,
  Bashlykov, N., Batra, S., Bhargava, P., Bhosale, S., et~al.
\newblock Llama 2: Open foundation and fine-tuned chat models.
\newblock \emph{arXiv preprint arXiv:2307.09288}, 2023{\natexlab{b}}.

\bibitem[Vaswani et~al.(2017)Vaswani, Shazeer, Parmar, Uszkoreit, Jones, Gomez,
  Kaiser, and Polosukhin]{vaswani2017attention}
Vaswani, A., Shazeer, N., Parmar, N., Uszkoreit, J., Jones, L., Gomez, A.~N.,
  Kaiser, {\L}., and Polosukhin, I.
\newblock Attention is all you need.
\newblock \emph{Advances in neural information processing systems}, 2017.

\bibitem[Vinje \& Gallant(2000)Vinje and Gallant]{vinje2000sparse}
Vinje, W.~E. and Gallant, J.~L.
\newblock Sparse coding and decorrelation in primary visual cortex during
  natural vision.
\newblock \emph{Science}, 287\penalty0 (5456):\penalty0 1273--1276, 2000.

\bibitem[Von~Oswald et~al.(2023)Von~Oswald, Niklasson, Randazzo, Sacramento,
  Mordvintsev, Zhmoginov, and Vladymyrov]{von2023transformers}
Von~Oswald, J., Niklasson, E., Randazzo, E., Sacramento, J., Mordvintsev, A.,
  Zhmoginov, A., and Vladymyrov, M.
\newblock Transformers learn in-context by gradient descent.
\newblock In \emph{International Conference on Machine Learning}. PMLR, 2023.

\bibitem[Wang et~al.(2018)Wang, Singh, Michael, Hill, Levy, and
  Bowman]{wang-etal-2018-glue}
Wang, A., Singh, A., Michael, J., Hill, F., Levy, O., and Bowman, S.
\newblock {GLUE}: A multi-task benchmark and analysis platform for natural
  language understanding.
\newblock In \emph{Proceedings of the 2018 {EMNLP} Workshop {B}lackbox{NLP}:
  Analyzing and Interpreting Neural Networks for {NLP}}. Association for
  Computational Linguistics, 2018.

\bibitem[Wang et~al.(2022)Wang, Mishra, Alipoormolabashi, Kordi, Mirzaei, Naik,
  Ashok, Dhanasekaran, Arunkumar, Stap, et~al.]{wang2022super}
Wang, Y., Mishra, S., Alipoormolabashi, P., Kordi, Y., Mirzaei, A., Naik, A.,
  Ashok, A., Dhanasekaran, A.~S., Arunkumar, A., Stap, D., et~al.
\newblock Super-naturalinstructions: Generalization via declarative
  instructions on 1600+ nlp tasks.
\newblock In \emph{Proceedings of the 2022 Conference on Empirical Methods in
  Natural Language Processing}, pp.\  5085--5109, 2022.

\bibitem[Wei et~al.(2022{\natexlab{a}})Wei, Bosma, Zhao, Guu, Yu, Lester, Du,
  Dai, and Le]{wei2022finetuned}
Wei, J., Bosma, M., Zhao, V., Guu, K., Yu, A.~W., Lester, B., Du, N., Dai,
  A.~M., and Le, Q.~V.
\newblock Finetuned language models are zero-shot learners.
\newblock In \emph{International Conference on Learning Representations},
  2022{\natexlab{a}}.

\bibitem[Wei et~al.(2022{\natexlab{b}})Wei, Tay, Bommasani, Raffel, Zoph,
  Borgeaud, Yogatama, Bosma, Zhou, Metzler, Chi, Hashimoto, Vinyals, Liang,
  Dean, and Fedus]{wei2022emergent}
Wei, J., Tay, Y., Bommasani, R., Raffel, C., Zoph, B., Borgeaud, S., Yogatama,
  D., Bosma, M., Zhou, D., Metzler, D., Chi, E.~H., Hashimoto, T., Vinyals, O.,
  Liang, P., Dean, J., and Fedus, W.
\newblock Emergent abilities of large language models.
\newblock \emph{Transactions on Machine Learning Research}, 2022{\natexlab{b}}.

\bibitem[Wei et~al.(2022{\natexlab{c}})Wei, Wang, Schuurmans, Bosma, Xia, Chi,
  Le, Zhou, et~al.]{wei2022chain}
Wei, J., Wang, X., Schuurmans, D., Bosma, M., Xia, F., Chi, E., Le, Q.~V.,
  Zhou, D., et~al.
\newblock Chain-of-thought prompting elicits reasoning in large language
  models.
\newblock \emph{Advances in Neural Information Processing Systems},
  2022{\natexlab{c}}.

\bibitem[Wei et~al.(2023{\natexlab{a}})Wei, Hou, Lampinen, Chen, Huang, Tay,
  Chen, Lu, Zhou, Ma, and Le]{wei2023symbol}
Wei, J., Hou, L., Lampinen, A.~K., Chen, X., Huang, D., Tay, Y., Chen, X., Lu,
  Y., Zhou, D., Ma, T., and Le, Q.~V.
\newblock Symbol tuning improves in-context learning in language models.
\newblock In \emph{The 2023 Conference on Empirical Methods in Natural Language
  Processing}, 2023{\natexlab{a}}.

\bibitem[Wei et~al.(2023{\natexlab{b}})Wei, Wei, Tay, Tran, Webson, Lu, Chen,
  Liu, Huang, Zhou, et~al.]{wei2023larger}
Wei, J., Wei, J., Tay, Y., Tran, D., Webson, A., Lu, Y., Chen, X., Liu, H.,
  Huang, D., Zhou, D., et~al.
\newblock Larger language models do in-context learning differently.
\newblock \emph{arXiv preprint arXiv:2303.03846}, 2023{\natexlab{b}}.

\bibitem[Wibisono \& Wang(2023)Wibisono and Wang]{wibisono2023role}
Wibisono, K.~C. and Wang, Y.
\newblock On the role of unstructured training data in transformers' in-context
  learning capabilities.
\newblock In \emph{NeurIPS 2023 Workshop on Mathematics of Modern Machine
  Learning}, 2023.

\bibitem[Wick(1950)]{wick1950evaluation}
Wick, G.-C.
\newblock The evaluation of the collision matrix.
\newblock \emph{Physical review}, 1950.

\bibitem[Wu et~al.(2024)Wu, Zou, Chen, Braverman, Gu, and Bartlett]{wu2023many}
Wu, J., Zou, D., Chen, Z., Braverman, V., Gu, Q., and Bartlett, P.~L.
\newblock How many pretraining tasks are needed for in-context learning of
  linear regression?
\newblock In \emph{The Twelfth International Conference on Learning
  Representations}, 2024.

\bibitem[Xie et~al.(2022)Xie, Raghunathan, Liang, and Ma]{xie2022an}
Xie, S.~M., Raghunathan, A., Liang, P., and Ma, T.
\newblock An explanation of in-context learning as implicit bayesian inference.
\newblock In \emph{International Conference on Learning Representations}, 2022.

\bibitem[Xu et~al.(2023)Xu, Shi, Wei, Li, and Liang]{xu2023improving}
Xu, Z., Shi, Z., Wei, J., Li, Y., and Liang, Y.
\newblock Improving foundation models for few-shot learning via multitask
  finetuning.
\newblock In \emph{ICLR 2023 Workshop on Mathematical and Empirical
  Understanding of Foundation Models}, 2023.

\bibitem[Xu et~al.(2024{\natexlab{a}})Xu, Shi, and Liang]{xsl24}
Xu, Z., Shi, Z., and Liang, Y.
\newblock Do large language models have compositional ability? an investigation
  into limitations and scalability.
\newblock In \emph{ICLR 2024 Workshop on Mathematical and Empirical
  Understanding of Foundation Models}, 2024{\natexlab{a}}.

\bibitem[Xu et~al.(2024{\natexlab{b}})Xu, Shi, Wei, Mu, Li, and Liang]{zhuoyan}
Xu, Z., Shi, Z., Wei, J., Mu, F., Li, Y., and Liang, Y.
\newblock Towards few-shot adaptation of foundation models via multitask
  finetuning.
\newblock In \emph{The Twelfth International Conference on Learning
  Representations}, 2024{\natexlab{b}}.

\bibitem[Yao et~al.(2023)Yao, Yu, Zhao, Shafran, Griffiths, Cao, and
  Narasimhan]{yao2023tree}
Yao, S., Yu, D., Zhao, J., Shafran, I., Griffiths, T.~L., Cao, Y., and
  Narasimhan, K.~R.
\newblock Tree of thoughts: Deliberate problem solving with large language
  models.
\newblock In \emph{Thirty-seventh Conference on Neural Information Processing
  Systems}, 2023.

\bibitem[Zhang et~al.(2023{\natexlab{a}})Zhang, Zhang, Yu, Madeka, Foster,
  Xing, Lakkaraju, and Kakade]{zhang2023study}
Zhang, H., Zhang, Y.-F., Yu, Y., Madeka, D., Foster, D., Xing, E., Lakkaraju,
  H., and Kakade, S.
\newblock A study on the calibration of in-context learning.
\newblock \emph{arXiv preprint arXiv:2312.04021}, 2023{\natexlab{a}}.

\bibitem[Zhang et~al.(2023{\natexlab{b}})Zhang, Frei, and
  Bartlett]{zhang2023trained}
Zhang, R., Frei, S., and Bartlett, P.~L.
\newblock Trained transformers learn linear models in-context.
\newblock \emph{arXiv preprint arXiv:2306.09927}, 2023{\natexlab{b}}.

\bibitem[Zhang et~al.(2023{\natexlab{c}})Zhang, Han, Zhou, Hu, Yan, Lu, Li,
  Gao, and Qiao]{zhang2023llama}
Zhang, R., Han, J., Zhou, A., Hu, X., Yan, S., Lu, P., Li, H., Gao, P., and
  Qiao, Y.
\newblock Llama-adapter: Efficient fine-tuning of language models with
  zero-init attention.
\newblock \emph{arXiv preprint arXiv:2303.16199}, 2023{\natexlab{c}}.

\bibitem[Zhao et~al.(2021)Zhao, Wallace, Feng, Klein, and
  Singh]{zhao2021calibrate}
Zhao, Z., Wallace, E., Feng, S., Klein, D., and Singh, S.
\newblock Calibrate before use: Improving few-shot performance of language
  models.
\newblock In \emph{International Conference on Machine Learning}. PMLR, 2021.

\bibitem[Zheng et~al.(2024)Zheng, Mishra, Chen, Cheng, Chi, Le, and
  Zhou]{zheng2023take}
Zheng, H.~S., Mishra, S., Chen, X., Cheng, H.-T., Chi, E.~H., Le, Q.~V., and
  Zhou, D.
\newblock Step-back prompting enables reasoning via abstraction in large
  language models.
\newblock In \emph{The Twelfth International Conference on Learning
  Representations}, 2024.

\bibitem[Zhou et~al.(2023{\natexlab{a}})Zhou, Liu, Xu, Iyer, Sun, Mao, Ma,
  Efrat, Yu, YU, Zhang, Ghosh, Lewis, Zettlemoyer, and Levy]{zhou2023lima}
Zhou, C., Liu, P., Xu, P., Iyer, S., Sun, J., Mao, Y., Ma, X., Efrat, A., Yu,
  P., YU, L., Zhang, S., Ghosh, G., Lewis, M., Zettlemoyer, L., and Levy, O.
\newblock {LIMA}: Less is more for alignment.
\newblock In \emph{Thirty-seventh Conference on Neural Information Processing
  Systems}, 2023{\natexlab{a}}.

\bibitem[Zhou et~al.(2022)Zhou, Nova, Larochelle, Courville, Neyshabur, and
  Sedghi]{zhou2022teaching}
Zhou, H., Nova, A., Larochelle, H., Courville, A., Neyshabur, B., and Sedghi,
  H.
\newblock Teaching algorithmic reasoning via in-context learning.
\newblock \emph{arXiv preprint arXiv:2211.09066}, 2022.

\bibitem[Zhou et~al.(2023{\natexlab{b}})Zhou, Bradley, Littwin, Razin, Saremi,
  Susskind, Bengio, and Nakkiran]{zhou2023algorithms}
Zhou, H., Bradley, A., Littwin, E., Razin, N., Saremi, O., Susskind, J.,
  Bengio, S., and Nakkiran, P.
\newblock What algorithms can transformers learn? a study in length
  generalization.
\newblock In \emph{The 3rd Workshop on Mathematical Reasoning and AI at
  NeurIPS'23}, 2023{\natexlab{b}}.

\end{thebibliography}
\bibliographystyle{icml2024}

\newpage
\appendix
\onecolumn

\begin{center}
	\textbf{\LARGE Appendix }
\end{center}

\section{Limitations}
We study and understand an interesting phenomenon of in-context learning: smaller models are more robust to noise, while larger ones are more easily distracted, leading to different ICL behaviors. Although we study two stylized settings and give the closed-form solution, our analysis cannot extend to real Transformers easily due to the high non-convex function and complicated design of multiple-layer Transformers. Also, our work does not study optimization trajectory, which we leave as future work. On the other hand, we use simple binary classification real-world datasets to verify our analysis, which still has a gap for the practical user using the LLM scenario.

\section{Deferred Proof for Linear Regression}

\subsection{Proof of \Cref{thm:low_opt}}\label{app:thm:low_opt}
Here, we provide the proof of \Cref{thm:low_opt}. 

\lowopt*
\begin{proof}[Proof of \Cref{thm:low_opt}]
Note that, 
\begin{align*}
    \argmin_{\U \in \R^{d\times d}, \rank(\U) \le r, u \in \R} \tilde{\ell}(\U, u) = & \argmin_{\U \in \R^{d\times d}, \rank(\U) \le r, u \in \R} \tilde{\ell}(\U, u) -  \min_{\U \in \R^{d\times d}, u \in \R} \tilde{\ell}(\U, u)\\
    = &  \argmin_{\U \in \R^{d\times d}, \rank(\U) \le r, u \in \R} \left(\tilde{\ell}(\U, u) -  \min_{\U \in \R^{d\times d}, u \in \R} \tilde{\ell}(\U, u)\right).
\end{align*}
Thus, we may consider \Cref{eq:min_gap} in \Cref{lem:min} only. 
On the other hand, we have 
\begin{align*}
    \Gamma = &  \left(1+{1\over N}\right)\Lambda + {1\over N} \tr(\Lambda) I_{d \times d} \\
    = & \left(1+{1\over N}\right)\Q \eigen \Q^\top + {1\over N} \tr(\eigen) \Q I_{d \times d} \Q^\top\\
    = & \Q \left(\left(1+{1\over N}\right) \eigen + {1\over N} \tr(\eigen) I_{d \times d} \right) \Q^\top. 
\end{align*}
We denote $\eigen' = \left(1+{1\over N}\right) \eigen + {1\over N} \tr(\eigen) I_{d \times d}$. We can see $\Lambda^{1 \over 2} = \Q {\eigen}^{ 1\over2}\Q^\top$, $\Gamma^{1 \over 2} = \Q {\eigen'}^{ 1\over2}\Q^\top$, and $\Gamma^{-1} = \Q {\eigen'}^{-1}\Q^\top$. We denote $\V = u\Q^\top \U\Q$. Since $\Gamma$ and $\Lambda$ are commutable and the Frobenius norm ($F$-norm) of a matrix does not change after multiplying it by an orthonormal matrix, we have \Cref{eq:min_gap} as
\begin{align*}
    \tilde{\ell}(\U, u) -  \min_{\U \in \R^{d\times d}, u \in \R} \tilde{\ell}(\U, u) 
    = & {1\over 2} \left\|\Gamma^{1\over 2} \left(u \Lambda^{1\over 2}\U \Lambda^{1\over 2} -\Lambda\Gamma^{-1} \right) \right\|_F^2 \\
    = & {1\over 2} \left\|\Gamma^{1\over 2}\Lambda^{1\over 2} \left(u \U  - \Gamma^{-1} \right)\Lambda^{1\over 2} \right\|_F^2 \\
    = & {1\over 2} \left\|{\eigen'}^{ 1\over2} {\eigen}^{ 1\over2} \left({\V}  -  {\eigen'}^{-1} \right) {\eigen}^{ 1\over2} \right\|_F^2.
\end{align*}
As $\W^{KQ}$ is a matrix whose rank is at most $r$, we have $\V$ is also at most rank $r$. Then, we denote $\V^* = \argmin_{\V \in \R^{d\times d}, \rank(\V) \le r} \left\|{\eigen'}^{ 1\over2} {\eigen}^{ 1\over2} \left({\V}  -  {\eigen'}^{-1} \right) {\eigen}^{ 1\over2} \right\|_F^2$. We can see that $\V^*$ is a diagonal matrix. Denote $\eigen' = \diag([\lambda_1', \dots, \lambda_d'])$ and $\V^* = \diag([v^*_1, \dots, v^*_d])$. Then, we have 
\begin{align}
    & \left\|{\eigen'}^{ 1\over2} {\eigen}^{ 1\over2} \left({\V}  -  {\eigen'}^{-1} \right) {\eigen}^{ 1\over2} \right\|_F^2 \\
    = & \sum_{i=1}^d \left({\lambda_i'}^{1\over 2}\lambda_i\left(v^*_i - {1\over {\lambda_i'}}\right) \right)^2\\
    = & \sum_{i=1}^d  \left(\left(1+ {1\over N} \right)\lambda_i + {\tr(\eigen)\over N}\right)\lambda_i^2\left(v^*_i - {1\over {\left(1+ {1\over N} \right)\lambda_i + {\tr(\eigen)\over N}}}\right)^2. \label{eq:loss_detail}
\end{align}
As $\V^*$ is the minimum rank $r$ solution, we have that $v^*_i \ge 0$ for any $i \in [d]$ and if $v^*_i > 0$, we have $v^*_i = {1\over {\left(1+ {1\over N} \right)\lambda_i + {\tr(\eigen)\over N}}}$. Denote $g(x) = \left(\left(1+ {1\over N} \right)x + {\tr(\eigen)\over N}\right)x^2\left({1\over {\left(1+ {1\over N} \right)x + {\tr(\eigen)\over N}}}\right)^2 = x^2\left({1\over {\left(1+ {1\over N} \right)x + {\tr(\eigen)\over N}}}\right)$. It is easy to see that $g(x)$ is an increasing function on $[0,\infty)$. Now, we use contradiction to show that $\V^*$ only has non-zero entries in the first $r$ diagonal entries. Suppose $i>r$, such that $v^*_i > 0$, then we must have $j \le r$ such that $v^*_j = 0$ as $\V^*$ is a rank $r$ solution. We find that if we set $v^*_i = 0, v^*_j = {1\over {\left(1+ {1\over N} \right)\lambda_j + {\tr(\eigen)\over N}}}$ and all other values remain the same, \Cref{eq:loss_detail} will strictly decrease as $g(x)$ is an increasing function on $[0,\infty)$. Thus, here is a contradiction. We finish the proof by $\V^* = u\Q^\top \U^* \Q$. 
\end{proof}

\subsection{Behavior Difference}\label{app:thm:mse}  
\mse*
\begin{proof}[Proof of \Cref{thm:mse}]
By \Cref{thm:low_opt}, w.l.o.g, letting $c=1$, the optimal rank-$r$ solution $f_{\textup{LSA},\theta}$ satisfies $\theta = (\W^{PV}, \W^{KQ})$, and
\begin{align*}
& \W^{*PV} = \begin{pmatrix}
0_{d\times d} & 0_d \\
0_d^\top & 1 
\end{pmatrix},  \W^{*KQ} = \begin{pmatrix}
\U^* & 0_d \\
0_d^\top & 0 
\end{pmatrix},
\end{align*}
where $\U^* = \Q \V^* \Q^\top$. 

We can see that $\U^*$ and $\Lambda$ commute. 
Denote $\widehat{\Lambda}:= {1\over M}\sum_{i=1}^M \x_i \x_i^\top$. Note that we have 
\begin{align*}
    \widehat{y}_{q} = & f_{\textup{LSA},\theta}(\widehat{\bE}) \\
    = & \begin{pmatrix}
0_{d\times d} & 0_d \\
0_d^\top & 1 
\end{pmatrix}\left( {\widehat{\bE} \widehat{\bE}^\top \over M} \right) 
    \begin{pmatrix}
\U^* & 0_d \\
0_d^\top & 0 
\end{pmatrix} \x_{q} \\
= & \begin{pmatrix}
0_{d\times d} & 0_d \\
0_d^\top & 1 
\end{pmatrix} 
\begin{pmatrix}
{1\over M} \left(\x_q \x_q^\top + \sum_{i=1}^M \x_i \x_i^\top \right) & {1\over M} \left(\sum_{i=1}^M \x_i \x_i^\top \w + \sum_{i=1}^M \epsilon_i \x_i \right)  \\
{1\over M} \left(\sum_{i=1}^M \w^\top \x_i \x_i^\top + \sum_{i=1}^M \epsilon_i \x_i^\top \right) & {1\over M} \sum_{i=1}^M (\w^\top \x_i + \epsilon_i)^2 
\end{pmatrix} \nonumber
\\
& \cdot \begin{pmatrix}
\U^* & 0_d \\
0_d^\top & 0 
\end{pmatrix} \x_{q}  \\
= & \left(\w^\top \widehat{\Lambda} + {1\over M} \sum_{i=1}^M \epsilon_i \x_i^\top \right) \U^* \x_{q}. 
\end{align*}
Then, we have
\begin{align*}
    & \E_{\x_1,\epsilon_1,\dots,\x_M,\epsilon_M,\x_q}\left( \widehat{y}_{q} - \inner{\w,\x_q}\right)^2\\
    = & \E_{\x_1,\epsilon_1,\dots,\x_M,\epsilon_M,\x_q} \left(\w^\top \widehat{\Lambda} \U^* \x_{q} + {1\over M}\sum_{i=1}^M \epsilon_i \x_i^\top \U^* \x_{q}   - {\w^\top \x_q}\right)^2
    \\
    = & \underbrace{\E \left[\left(\w^\top \widehat{\Lambda} \U^* \x_{q}- {\w^\top \x_q}\right)^2 \right]}_{\text{(I)}} + \underbrace{\E \left[\left({1\over M}\sum_{i=1}^M \epsilon_i \x_i^\top \U^* \x_{q}\right)^2\right]}_{\text{(II)}},
\end{align*}
where the last equality is due to i.i.d. of $\epsilon_i$. We see that the label noise can only have an effect in the second term. For the term (I) we have,
\begin{align*}
    \text{(I)} = & \E \left[\left(\w^\top \widehat{\Lambda} \U^* \x_{q} - \w^\top \Lambda \U^* \x_{q} + \w^\top \Lambda \U^* \x_{q} -  {\w^\top \x_q}\right)^2 \right] \\
    = & \underbrace{\E \left[\left(\w^\top \widehat{\Lambda} \U^* \x_{q} - \w^\top \Lambda \U^* \x_{q}\right)^2 \right]}_{\text{(III)}} +  \underbrace{\E \left[\left(\w^\top \Lambda \U^* \x_{q} -  {\w^\top \x_q}\right)^2 \right]}_{\text{(IV)}},
\end{align*}
where the last equality is due to $\E[\widehat{\Lambda}] = \Lambda$ and $\widehat{\Lambda}$ is independent with $\x_{q}$. Note the fact that $\U^*$ and $\Lambda$ commute. For the (III) term, we have
\begin{align*}
    \text{(III)} = & \E \left[\E \left[\left(\w^\top \widehat{\Lambda} \U^* \x_{q}\right)^2 + \left( \w^\top \Lambda \U^* \x_{q}\right)^2 - 2\left(\w^\top \widehat{\Lambda} \U^* \x_{q}\right)\left( \w^\top \Lambda \U^* \x_{q}\right) \right] \middle| \x_q\right] \\
    = & \E \left[\left(\w^\top \widehat{\Lambda} \U^* \x_{q}\right)^2 - \left( \w^\top \Lambda \U^* \x_{q}\right)^2 \right].
\end{align*}
By the property of trace, we have,
\begin{align*}
    \text{(III)} = &  \E \left[ \tr\left(\widehat{\Lambda}\w \w^\top \widehat{\Lambda}   (\U^*)^2 \Lambda\right)\right] - \|\w\|_{(\U^*)^2\Lambda^3}^2\\
    = &  \E \left[ {1\over M^2}\tr\left( \left(\sum_{i=1}^M \x_i \x_i^\top\right) \w \w^\top \left(\sum_{i=1}^M \x_i \x_i^\top\right)   (\U^*)^2 \Lambda\right)\right] - \|\w\|_{(\U^*)^2\Lambda^3}^2\\
    = &  \E \left[ {M-1\over M}\tr\left( \Lambda \w \w^\top \Lambda   (\U^*)^2 \Lambda\right) + {1\over M}\tr\left( \x_1 \x_1^\top \w \w^\top \x_1 \x_1^\top   (\U^*)^2 \Lambda\right)\right] - \|\w\|_{(\U^*)^2\Lambda^3}^2\\
    = &  -{1\over M} \|\w\|_{(\U^*)^2 \Lambda^3}^2 + {1\over M} \E \left[ \tr\left( \x_1 \x_1^\top \w \w^\top \x_1 \x_1^\top   (\U^*)^2 \Lambda\right)\right] 
    \\
    = &  -{1\over M} \|\w\|_{(\U^*)^2 \Lambda^3}^2 + {1\over M} \E \left[ \tr\left( \left(\|\w\|_\Lambda^2 \Lambda + 2 \Lambda \w^\top \w\Lambda\right)   (\U^*)^2 \Lambda\right)\right] \\
    = &  {1\over M} \|\w\|_{(\U^*)^2 \Lambda^3}^2 + {1\over M} \|\w\|_\Lambda^2  \tr\left(    (\U^*)^2 \Lambda^2\right),
\end{align*}
where the third last equality is by \Cref{lem:Isserlis}. Furthermore, injecting $\w = \Q(\s + \xi)$, as $\xi^\top \V^*$ is a zero vector, we have
\begin{align*}
    \text{(III)} & =  {1\over M} \|\s + \xi\|_{(\V^*)^2 \eigen^3}^2 + {1\over M} \|\s + \xi\|_\eigen^2  \tr\left(    (\V^*)^2 \eigen^2\right) \\
    & =  {1\over M} \|\s \|_{(\V^*)^2 \eigen^3}^2 + {1\over M} \|\s + \xi\|_\eigen^2  \tr\left(    (\V^*)^2 \eigen^2\right).
\end{align*}
Similarly, for the term (IV), we have
\begin{align*}
    \text{(IV)} = & \E \left[\left((\s+\xi)^\top \Q^\top \Lambda \U^* \x_{q} -  {(\s+\xi)^\top \Q^\top \x_q}\right)^2 \right] \\
    = & \E \left[\left(\s^\top \eigen \V^* \Q^\top \x_{q} -  {\s^\top \Q^\top \x_q} -  {\xi^\top \Q^\top \x_q}\right)^2 \right] \\
    = & \s^\top (\V^*)^2 \eigen^3 \s + \s^\top  \eigen \s + \xi^\top  \eigen \xi - 2 \s^\top \V^* \eigen^2 \s\\
    = & \xi^\top \eigen \xi + \sum_{i \in[r]} \s_i^2 \lambda_i\left(\lambda_i^2 (v^*_i)^2 - 2 \lambda_i v^*_i + 1 \right)\\
    = & \|\xi\|_\eigen^2 + \sum_{i \in[r]} \s_i^2 \lambda_i\left( \lambda_i v^*_i - 1 \right)^2, 
\end{align*}
where the third equality is due to $\s^\top \A \xi = 0$ for any diagonal matrix $\A \in \R^{d \times d}$.   

Now, we analyze the label noise term. By $\U^*$ and $\Lambda$ being commutable, for the term (II), we have
\begin{align*}
    \text{(II)} = &  {\sigma^2\over M^2} \E \left[\left(\sum_{i=1}^M \x_i^\top \U^* \x_{q}\right)^2\right]\\
    = &  {\sigma^2\over M^2} \E \left[\tr\left(\left(\sum_{i=1}^M \x_i\right)^\top \U^* \Lambda \U^* \left(\sum_{i=1}^M \x_i\right)\right)\right]\\
    = &  {\sigma^2\over M} \E \left[\tr\left(\x_1^\top \U^* \Lambda \U^* \x_1\right)\right]
    \\
    = &  {\sigma^2\over M} \tr\left(( \V^*)^2 \eigen^2 \right),
\end{align*}
where all cross terms vanish in the second equality.  
We conclude by combining four terms. 
\end{proof}

\diff*
\begin{proof}[Proof of \Cref{prop:diff}]
Let $\V^* = \diag([v^*_1, \dots, v^*_d])$ satisfying for any $i \le {r}, v^*_i = {N \over {\left(N+1 \right)\lambda_i + {\tr(\eigen)}}}$ and for any $i > {r}, v^*_i = 0$. Let ${\V'}^* = \diag([{v'}^*_1, \dots, {v'}^*_d])$ be satisfied for any $i \le {r'}, {v'}^*_i = {N \over {\left(N+1 \right)\lambda_i + {\tr(\eigen)}}}$ and for any $i > {r'}, {v'}^*_i = 0$. Note that $\V^*$ is a truncated diagonal matrix of ${\V'}^*$.  By \Cref{thm:low_opt} and \Cref{thm:mse}, we have
\begin{align*}
    \mathcal{L}(f_2;\widehat{\bE}) - \mathcal{L}(f_1;\widehat{\bE})
    = & \left({1\over M} \|\s \|_{({\V'}^*)^2 \eigen^3}^2 + {1\over M} \left(\|\s \|_\eigen^2 + \sigma^2\right)   \tr\left(    ({\V'}^*)^2 \eigen^2\right) + \sum_{i \in[{r'}]} \s_i^2 \lambda_i\left( \lambda_i {v'}^*_i - 1 \right)^2 \right) \\
    & - \left({1\over M} \|\s \|_{(\V^*)^2 \eigen^3}^2 + {1\over M} \left(\|\s \|_\eigen^2 + \sigma^2\right)   \tr\left(    (\V^*)^2 \eigen^2\right) + \sum_{i \in[{r}]} \s_i^2 \lambda_i\left( \lambda_i v^*_i - 1 \right)^2 \right)\\
    = & {1\over M} \left(\|\s \|_\eigen^2 + \sigma^2\right)   \left(\tr\left(    ({\V'}^*)^2 \eigen^2\right) - \tr\left((\V^*)^2 \eigen^2\right) \right)   \\
    = & {1\over M} \left(\|\s \|_\eigen^2 + \sigma^2\right)   \left(\sum_{i = {r}+1}^{{r'}} \left({N \lambda_i \over {\left(N+1 \right)\lambda_i + {\tr(\eigen)}}}\right)^2\right).
\end{align*}
\end{proof}

\subsection{Auxiliary Lemma}
\Cref{lem:min} provides the structure of the quadratic form of our MSE loss. 
\begin{lemma}[Corollary A.2 in~\citet{zhang2023trained}]\label{lem:min}
The loss function $\tilde{\ell}$ in \Cref{lem:loss_simple} satisfies
\begin{align*}
    \min_{\U \in \R^{d\times d}, u \in \R} \tilde{\ell}(\U, u) = -{1\over 2}\tr[\Lambda^2\Gamma^{-1}],
\end{align*}
where $\U = c\Gamma^{-1}, u={1\over c}$ for any non-zero constant $c$ are minimum solution. We also have
\begin{align}
   \tilde{\ell}(\U, u) -  \min_{\U \in \R^{d\times d}, u \in \R} \tilde{\ell}(\U, u) = {1\over 2} \left\|\Gamma^{1\over 2} \left(u \Lambda^{1\over 2}\U \Lambda^{1\over 2} -\Lambda\Gamma^{-1} \right) \right\|_F^2. \label{eq:min_gap}
\end{align}
\end{lemma}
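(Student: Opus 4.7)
The plan is to reduce the two-variable minimization to a single-matrix problem and then complete the matrix square. First I would use the fact that $\tilde{\ell}(\U,u) = \tfrac{1}{2}\tr[\Gamma\Lambda(u\U)\Lambda(u\U)^\top] - \tr[\Lambda^2(u\U)^\top]$ depends on $(\U,u)$ only through the product $\V := u\U$. So it suffices to minimize $F(\V) := \tfrac{1}{2}\tr[\Gamma\Lambda\V\Lambda\V^\top] - \tr[\Lambda^2\V^\top]$ over $\V \in \mathbb{R}^{d\times d}$, and then any factorization $\U = c\Gamma^{-1}, u=1/c$ will recover a minimizer of $\tilde{\ell}$.

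The key observation is that $\Gamma = (1 + 1/N)\Lambda + (1/N)\tr(\Lambda)I$ is a polynomial in $\Lambda$, so $\Gamma$ and $\Lambda$ commute and share an orthonormal eigenbasis; moreover $\Gamma$ is strictly positive definite (assuming $\tr(\Lambda)>0$), so $\Gamma^{1/2}, \Gamma^{-1/2}, \Gamma^{-1}$ are well defined and all commute with $\Lambda^{1/2}$. I would then set
\begin{align*}
M := \Gamma^{1/2}\Lambda^{1/2}\V\Lambda^{1/2}, \qquad N := \Gamma^{-1/2}\Lambda,
\end{align*}
and verify, using only cyclic trace and commutativity, the three identities $\|M\|_F^2 = \tr[\Gamma\Lambda\V\Lambda\V^\top]$, $\tr[M^\top N] = \tr[\Lambda^2 \V^\top]$, and $\|N\|_F^2 = \tr[\Lambda^2\Gamma^{-1}]$. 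These combine into the completed-square identity
\begin{align*}
\tilde{\ell}(\U,u) \;=\; F(\V) \;=\; \tfrac{1}{2}\|M - N\|_F^2 \;-\; \tfrac{1}{2}\tr[\Lambda^2\Gamma^{-1}].
\end{align*}

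From here everything follows. The minimum value is $-\tfrac{1}{2}\tr[\Lambda^2\Gamma^{-1}]$, attained iff $M = N$, i.e.\ $\Gamma^{1/2}\Lambda^{1/2}\V\Lambda^{1/2} = \Gamma^{-1/2}\Lambda$; using commutativity and canceling $\Gamma^{1/2}$ and $\Lambda^{1/2}$ on the range of $\Lambda$ yields $\V = \Gamma^{-1}$, which corresponds exactly to the claimed family $\U = c\Gamma^{-1},\ u = 1/c$. For the gap formula, I would just rewrite $M - N$ by pulling a $\Gamma^{1/2}$ out on the left: $M - N = \Gamma^{1/2}\Lambda^{1/2}\V\Lambda^{1/2} - \Gamma^{1/2}\cdot\Lambda\Gamma^{-1} = \Gamma^{1/2}\bigl(u\Lambda^{1/2}\U\Lambda^{1/2} - \Lambda\Gamma^{-1}\bigr)$, which is exactly the form in \eqref{eq:min_gap}.

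The main obstacle is not computational but careful bookkeeping: one must check that every trace manipulation is justified and, more delicately, handle the case where $\Lambda$ is only positive semidefinite. On the null space of $\Lambda$ the objective $F(\V)$ does not depend on $\V$, so the minimizer is not unique; this is consistent with the lemma statement, which asserts only that $\U = c\Gamma^{-1}, u = 1/c$ is \emph{a} minimizer (and all minimizers agree on the range of $\Lambda$). Once the commutativity of $\Gamma$ and $\Lambda$ is in hand, the remainder is routine algebra on the shared eigenbasis.
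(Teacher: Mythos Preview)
The paper does not prove this lemma itself; it is quoted verbatim as Corollary~A.2 of \citet{zhang2023trained} and used as a black-box auxiliary result in the proof of \Cref{thm:low_opt}. So there is no in-paper proof to compare against.

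That said, your argument is correct and is essentially the canonical one: reduce to the single matrix variable $\V = u\U$, use that $\Gamma$ is a polynomial in $\Lambda$ (hence they commute and share an eigenbasis, with $\Gamma$ strictly positive definite whenever $\tr(\Lambda)>0$), and complete the square in the Frobenius inner product. The three trace identities you list check out, and the factorization $M-N = \Gamma^{1/2}\bigl(u\Lambda^{1/2}\U\Lambda^{1/2} - \Lambda\Gamma^{-1}\bigr)$ is exactly the form of \eqref{eq:min_gap}. Your remark on the PSD-only case for $\Lambda$ is also correct: the minimizer is unique only on the range of $\Lambda$, which is consistent with the lemma's claim that $\U=c\Gamma^{-1},\,u=1/c$ is \emph{a} minimizer.

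One cosmetic point: you should rename your auxiliary matrix $N := \Gamma^{-1/2}\Lambda$, since $N$ is already the number of in-context examples in this paper and appears explicitly in the definition of $\Gamma$.
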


\begin{lemma}\label{lem:Isserlis}
    Let $\x \sim \cN(0, \Lambda), \epsilon\sim \cN(0, \sigma^2)$ and $y = \inner{\w,\x} + \epsilon$, where $\w \in \R^d$ is a fixed vector. Then we have 
    \begin{align*}
        \E\left[y^2 \x\x^\top\right] = & \sigma^2\Lambda + \|\w\|_\Lambda^2 \Lambda + 2 \Lambda \w^\top \w\Lambda,\\
        \E(y\x) \E(y\x)^\top = & \Lambda^\top \w \w^\top \Lambda, \\
        \E\left[(y\x - \E(y\x) ) (y\x - \E(y\x) ) ^\top\right] = &  \sigma^2\Lambda + \|\w\|_\Lambda^2 \Lambda + \Lambda \w^\top \w\Lambda.
    \end{align*} 
\end{lemma}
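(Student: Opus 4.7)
The plan is to prove each of the three identities by direct computation, using only the independence $\x \perp \epsilon$, the moments $\E[\epsilon]=0$, $\E[\epsilon^2]=\sigma^2$, $\E[\x\x^\top]=\Lambda$, and Isserlis' (Wick's) theorem for the fourth moments of a centered Gaussian. Throughout I will use the expansion $y^2 = \inner{\w,\x}^2 + 2\epsilon\inner{\w,\x} + \epsilon^2$.

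First I would dispatch the second identity, since it feeds into the third. By linearity and independence, $\E[y\x] = \E[\x\x^\top]\w + \E[\epsilon]\E[\x] = \Lambda\w$, from which $\E(y\x)\E(y\x)^\top = \Lambda\w\w^\top\Lambda = \Lambda^\top\w\w^\top\Lambda$ by symmetry of $\Lambda$.

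Next I would attack the first identity by splitting $\E[y^2\x\x^\top]$ into three pieces according to the expansion of $y^2$. The cross term $2\E[\epsilon\inner{\w,\x}\x\x^\top]$ vanishes by independence and $\E[\epsilon]=0$, and $\E[\epsilon^2\x\x^\top]=\sigma^2\Lambda$ is immediate. The only nontrivial piece is $\E[\inner{\w,\x}^2\x\x^\top]$, whose $(a,b)$ entry is $\sum_{i,j} w_i w_j\,\E[x_i x_j x_a x_b]$. Here I apply Isserlis to write $\E[x_i x_j x_a x_b] = \Lambda_{ij}\Lambda_{ab} + \Lambda_{ia}\Lambda_{jb} + \Lambda_{ib}\Lambda_{ja}$, and sum against $w_i w_j$ to get $\|\w\|_\Lambda^2\Lambda_{ab} + (\Lambda\w)_a(\Lambda\w)_b + (\Lambda\w)_b(\Lambda\w)_a$. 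In matrix form this is $\|\w\|_\Lambda^2\Lambda + 2\Lambda\w\w^\top\Lambda$, and adding the $\sigma^2\Lambda$ contribution yields the first identity.

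The third identity then follows purely algebraically from the standard decomposition $\E[(y\x-\E(y\x))(y\x-\E(y\x))^\top] = \E[y^2\x\x^\top] - \E(y\x)\E(y\x)^\top$: subtracting the second identity from the first cancels one copy of $\Lambda\w\w^\top\Lambda$ and leaves $\sigma^2\Lambda + \|\w\|_\Lambda^2\Lambda + \Lambda\w\w^\top\Lambda$, as claimed. There is no genuine obstacle here; the only step requiring care is the Isserlis contraction, where I must keep the index pairings straight so that exactly two of the three Wick pairings contribute to the $\Lambda\w\w^\top\Lambda$ term (the pairings $(i,a)(j,b)$ and $(i,b)(j,a)$) while the remaining pairing $(i,j)(a,b)$ produces $\|\w\|_\Lambda^2\Lambda$.
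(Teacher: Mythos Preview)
Your proof is correct and follows essentially the same approach as the paper: both rely on Isserlis' theorem for the fourth-moment computation, then obtain the covariance identity by subtracting the outer product of the mean. The only cosmetic difference is that the paper applies Isserlis directly to the joint Gaussian vector $(y,\x)$ (observing that $y$ is itself Gaussian, so $\E[y^2 x_i x_j]=\E[y^2]\E[x_i x_j]+2\E[y x_i]\E[y x_j]$), whereas you first split off $\epsilon$ by independence and apply Isserlis only to the quartic in $\x$; both routes produce the same three Wick pairings and the same result.
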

\begin{proof}[Proof of \Cref{lem:Isserlis}]
As $y$ is a zero mean Gaussian, by Isserlis' theorem~\cite{wick1950evaluation,michalowicz2009isserlis}, for any $i, j \in [d]$ we have 
\begin{align*}
   \E[y^2 \x_i \x_j] = & \E[y^2]\E[\x_i \x_j] + 2\E[y \x_i]\E[y \x_j]\\
   = & \left(\sigma^2 + \w^\top \Lambda \w \right)\Lambda_{i,j} + 2 \Lambda_{i}^\top \w \w^\top \Lambda_{j}.
\end{align*}
Thus, we have $\E\left[y^2 \x\x^\top\right] =  \left(\sigma^2 + \w^\top \Lambda \w \right)\Lambda + 2 \Lambda \w^\top \w\Lambda $. Similarly, we also have $\E(y\x) \E(y\x)^\top = \Lambda^\top \w \w^\top \Lambda$. Thus, we have 
\begin{align*}
     & \E\left[(y\x - \E(y\x) ) (y\x - \E(y\x) ) ^\top\right] \\
     = & \E\left[y^2 \x\x^\top - y\x \E(y\x)^\top - \E(y\x) y\x^\top +  \E(y\x) \E(y\x)^\top\right] \\
     = & \E\left[y^2 \x\x^\top\right] -  \E(y\x) \E(y\x)^\top
     \\
     = & \left(\sigma^2 + \w^\top \Lambda \w \right)\Lambda + \Lambda \w^\top \w\Lambda.
\end{align*}
\end{proof}
\section{Deferred Proof for Parity Classification}

\subsection{Proof of \Cref{theorem:opt_parity}}\label{app:thm:opt_parity}
Here, we provide the proof of \Cref{theorem:opt_parity}.
\optparity*
\begin{proof}[Proof of \Cref{theorem:opt_parity}]
Recall $\t_\tau = (i_\tau,j_\tau)$. Let $\z_{\tau} \in \R^d$ satisfy $\z_{\tau, i_\tau} = \z_{\tau, j_\tau} = 2\gamma$ and all other entries are zero. Denote $\V^{(i)} = \G^\top \W^{(i)} \G$. Notice that $\|\W^{(i)}\|_F^2 = \|\V^{(i)}\|_F^2$. Thus, we denote  $\V^{*, (i)} = \G^\top \W^{*, (i)} \G$.
Then, we have
\begin{align*}
    & \E_\tau\left[\ell\left( y_{\tau,q} \cdot g(\X_\tau, \y_\tau, \x_{\tau,q})\right) \right] \\
    = & \E_\tau\left[\ell\left( y_{\tau,q} \left(\sum_{i \in [m]} \a_i\act\left[ {\y_\tau^\top \X_\tau \over N} \W^{(i)} \x_{\tau, q} \right] \right) \right)\right]\\
    = & \E_\tau\left[\ell\left( y_{\tau,q} \left(\sum_{i \in [m]} \a_i\act\left[ \z_\tau^\top \V^{(i)} \phi_{\tau, q} \right]   \right) \right)\right]\\
    = & \E_\tau\left[\ell\left( y_{\tau,q} \left(\sum_{i \in [m]} \a_i\act\left[ 2\gamma (\V_{i_\tau,:}^{(i)} +\V_{j_\tau,:}^{(i)})\phi_{\tau, q} \right]   \right) \right)\right].
\end{align*} 
We can see that for any $i \in [m],$ $|\a^*_i| = 1$ and $\V_{j,l}^{*, (i)} = 0$ when $j\neq l$. As ReLU is a homogeneous function, we have
\begin{align*}
    & \E_\tau\left[\ell\left( y_{\tau,q} \cdot g^*(\X_\tau, \y_\tau, \x_{\tau,q})\right) \right]\\
    = & \underbrace{(1-p_\cT) \E\left[\ell\left( 2\gamma \phi_{\tau, q, i_\tau}\phi_{\tau, q, j_\tau} \left(\sum_{i \in [m]} \a^*_i\act\left[ \V_{i_\tau,i_\tau}^{*,(i)}  \phi_{\tau, q, i_\tau}+\V_{j_\tau,j_\tau}^{*,(i)}  \phi_{\tau, q, j_\tau} \right]   \right) \right)\middle| \t_\tau \in S_1  \right]}_{\text{(I)}}\\
    & \quad~~ + \underbrace{p_\cT \E\left[\ell\left( 2\gamma \phi_{\tau, q, i_\tau}\phi_{\tau, q, j_\tau} \left(\sum_{i \in [m]} \a^*_i\act\left[ \V_{i_\tau,i_\tau}^{*,(i)}  \phi_{\tau, q, i_\tau} + \V_{j_\tau,j_\tau}^{*,(i)}  \phi_{\tau, q, j_\tau} \right]     \right) \right)\middle| \t_\tau \in S_2 \right]}_{\text{(II)}}.
\end{align*}
We have
\begin{align*}
\text{(I)} = & (1-p_\cT) \cdot \Bigg\{({1\over 4} + \gamma) \E\left[\ell\left( 2\gamma \left(\sum_{i \in [m]} \a^*_i\act\left[ \V_{i_\tau,i_\tau}^{*,(i)}  +\V_{j_\tau,j_\tau}^{*,(i)} \right]   \right) \right)\middle| \t_\tau \in S_1\right] \\
& \quad\quad\quad\quad\quad +  {1\over 4} \E\left[\ell\left( -2\gamma \left(\sum_{i \in [m]} \a^*_i\act\left[ \V_{i_\tau,i_\tau}^{*,(i)}  -\V_{j_\tau,j_\tau}^{*,(i)} \right]   \right) \right)\middle| \t_\tau \in S_1\right] 
\\
& \quad\quad\quad\quad\quad +  ({1\over 4} - \gamma) \E\left[\ell\left( 2\gamma \left(\sum_{i \in [m]} \a^*_i\act\left[ -\V_{i_\tau,i_\tau}^{*,(i)}  -\V_{j_\tau,j_\tau}^{*,(i)} \right]   \right) \right)\middle| \t_\tau \in S_1\right] \\
& \quad\quad\quad\quad\quad +  {1\over 4} \E\left[\ell\left( -2\gamma \left(\sum_{i \in [m]} \a^*_i\act\left[ -\V_{i_\tau,i_\tau}^{*,(i)}  +\V_{j_\tau,j_\tau}^{*,(i)} \right]   \right) \right)\middle| \t_\tau \in S_1\right]\Bigg\}. 
\end{align*}
We can get a similar equation for (II). 

\paragraph{We make some definitions to be used.} We define a pattern as $(z_1, \{(i_\tau, z_2),  (j_\tau, z_3)\} )$, where $z_1, z_2, z_3 \in \{\pm 1\}$. We define a pattern is covered by a neuron means there exists $i \in [m]$, such that $\a^*_i = z_1$ and $\sign(\V_{i_\tau,i_\tau}^{*,(i)})=z_2$ and $ \sign(\V_{j_\tau,j_\tau}^{*,(i)})=z_3$. We define a neuron as being positive when its $\a^*_i = +1$ and being negative when its $\a^*_i = -1$. We define a pattern as being positive if $z_1 = +1$ and being negative if $z_1 = -1$.

Then all terms in (I) and (II) can be written as:
\begin{align*}
\alpha \E\left[\ell\left( 2\gamma z_1 \left(\sum_{i \in [m]} \a^*_i\act\left[ z_2 \V_{i_\tau,i_\tau}^{*,(i)}  + z_3 \V_{j_\tau,j_\tau}^{*,(i)} \right]   \right) \right)\right],
\end{align*}
where $\alpha$ is the scalar term. Note that there are total ${k(k-1)\over 2} \times 4$ patterns in (I) and $\left({d(d-1)\over 2}-{k(k-1)\over 2}\right) \times 4$ patterns in (II). 
The loss depends on the weighted sum of non-covered patterns. To have zero loss, we need all patterns to be covered by $m$ neurons, i.e., $(\a^*, \V^{*,(1)}, \dots, \V^{*,(m)})$. 

Note that one neuron at most cover ${d(d-1) \over 2}$ patterns. Also, by $0< p_\cT < { {1\over 4} - \gamma \over {d(d-1) \over 2} ({1\over 4} + \gamma) +  {1\over 4} - \gamma}$, we have 
\begin{align*}
    {d(d-1) \over 2} p_\cT ({1\over 4} + \gamma) < (1-p_\cT)({1\over 4} - \gamma),
\end{align*}
which means the model will only cover all patterns in (I) before covering a pattern in (II) in purpose. 

Now, we show that the minimum number of neurons to cover all patterns in (I) and (II) is $2(\nu_2 + 1)$. 

\paragraph{First, we show that $2(\nu_2 + 1)$ neurons are enough to cover all patterns in (I) and (II). }
For $i\in [\nu_2]$ and $i_\tau \in [d]$, $\V_{i_\tau,i_\tau}^{(i)} = (2\digit(\bin(i_\tau-1), i)-1)/(4\gamma)$ and all non-diagonal entries in $\V^{(i)}$ being zero and $ \a_i = -1$. For $i = \nu_2 +1$ and $i_\tau \in [d]$, $\V_{i_\tau,i_\tau}^{(i)} = -\nu_2/(4\gamma)$ and all non-diagonal entries in $\V^{(i)}$ being zero and $ \a_i = +1$. For $i \in [2(\nu_2 + 1)]\setminus [\nu_2 + 1]$, let $\V^{(i)} = - \V^{(i-\nu_2- 1)}$ and $ \a_i= \a_{i-\nu_2- 1} $. 

We can check that this construction can cover all patterns in (I) and (II) and only needs $2(\nu_2 + 1)$ neurons. $\V^{(\nu_2 + 1)}$ and $\V^{(2(\nu_2 + 1))}$ cover all positive patterns. All other neurons cover all negative patterns. This is because $\bin(i_\tau)$ and $\bin(j_\tau)$ have at least one digit difference. If $\bin(i_\tau)$ and $\bin(j_\tau)$ are different in the $i$-th digit, then $(-1, \{(i_\tau, -1),  (j_\tau, +1)\} )$ and $(-1, \{(i_\tau, +1),  (j_\tau, -1)\} )$ are covered by the $i$-th and $i+\nu_2 + 1$-th neuron. 

We can also check that the scalar ${1\over 4\gamma}$ and ${\nu_2 \over 4\gamma}$ is the optimal value. Note that 
\begin{enumerate}
    \item[(1)] For any negative patterns, the positive neurons will not have a cancellation effect on the negative neurons, i.e., when $y_q = -1$, the positive neurons will never activate. 
    \item[(2)] For each negative neuron, there exist some patterns that are uniquely covered by it. 
    \item[(3)] For any positive patterns, there are at most $\nu_2 - 1$ negative neurons that will have a cancellation effect on the positive neurons, i.e., when $y_q = +1$, these negative neurons will activate simultaneously. Also, we can check that there is a positive pattern such that there are $\nu_2 - 1$ negative neurons that will have a cancellation effect.
    \item[(4)] For two positive neurons, there exist some patterns that are uniquely covered by one of them. 
\end{enumerate}
Due to hinge loss, we can see that ${1\over 4 \gamma}$ is tight for negative neurons as (1) and (2). Similarly, we can also see that ${\nu_2\over 4 \gamma}$ is tight for positive neurons as (3) and (4). 

\paragraph{Second, we prove that we need at least $2(\nu_2 + 1)$ neurons to cover all patterns in (I) and (II).} 
We can see that we need at least 2 positive neurons to cover all positive patterns. Then, we only need to show that $2\nu_2-1$ neurons are not enough to cover all negative patterns. We can prove that all negative patterns are covered equivalent to all numbers from $\{0, 1, \dots, 2^{\nu_2}-1\}$ are encoded by $\left\{\left(\V_{i,i}^{(1)}, \dots, \V_{i,i}^{(\nu_2)}\right) ~\middle|~ i \in [k]\right\}$. Then $2\nu_2-1$ is not enough to do so. 

Therefore, the minimum number of neurons to cover all patterns in (I) and (II) is $2(\nu_2 + 1)$. 

Thus, when $m = 2(\nu_1 +1)$, the optimal solution will cover all patterns in (I) but not all in (II). When $m \ge 2(\nu_2 +1)$, the optimal solution will cover all patterns in (I) and (II). 
We see that $g^*_1$ neurons as the subset of $g^*_2$ neurons, while the only difference is that the scalar of positive neurons is ${\nu_1\over 4 \gamma}$ for $g^*_1$ and ${\nu_2\over 4 \gamma}$ for $g^*_2$. Thus, we finished the proof.
\end{proof}

\subsection{Proof of \Cref{theorem:parity_decouple}}\label{app:thm:parity_decouple}
Here, we provide the proof of \Cref{theorem:parity_decouple}.
\parityrobust*

\begin{proof}[Proof of \Cref{theorem:parity_decouple}]
Let $\Phi^\tau = [\phi_{\tau, 1}, \dots, \phi_{\tau, M} ]^\top \in \R^{M \times d}$. 
Recall $\t_\tau = (i_\tau,j_\tau)$. Let $\z_{\tau} \in \R^d$ satisfy $\z_{\tau, i_\tau} = \z_{\tau, j_\tau} = 2\gamma$ and all other entries are zero. We see $\t_\tau$ as an index set and let $\r_\tau = [d] \setminus \t_\tau $. Then, we have
\begin{align*}
    & g^*_2(\X_\tau, \y_\tau, \x_{\tau,q})\\
    = & \sum_{i \in [m]} \a^*_i\act\left[ {\y_\tau^\top \X_\tau \over M} \W^{*,(i)} \x_{\tau, q} \right] \\
    = & \sum_{i \in [m]} \a^*_i\act\left[ {\y_\tau^\top \Phi^\tau \over M} \V^{*,(i)} \phi_{\tau, q} \right]  \\
    = & \sum_{i \in [m]} \a^*_i\act\left[ {\y_\tau^\top \Phi^\tau_{:,\t_\tau} \over M} \V^{*,(i)}_{\t_\tau,:} \phi_{\tau, q, \t_\tau} + {\y_\tau^\top \Phi^\tau_{:,\r_\tau} \over M} \V^{*,(i)}_{\r_\tau,:} \phi_{\tau, q, \r_\tau}  \right]. 
\end{align*} 
Note that we can absorb the randomness of $\y_\tau, \Phi^\tau_{:,\r_\tau},\phi_{\tau, q, \r_\tau}$ together. 

Let $z_i$ for $i \in [n]$ uniformly draw from $\{-1,+1\}$. By Chernoff bound for binomial distribution (\Cref{lem:Chernoff}), for any $0 < \epsilon < 1$, we have 
\begin{align*}
    \Pr\left(\left|{\sum_{i\in [n]} z_i \over n} \right| \ge \epsilon \right) \le 2 \exp\left(- {\epsilon^2 n \over 6}\right).
\end{align*}

Thus, for any $0 < \delta <1$, with probability at least $1-\delta$ over the randomness of evaluation data, such that
\begin{align*}
    \left|\Xi_{\t_\tau}^\top \diag(\V^{*,(i)}_{\t_\tau,\t_\tau})\right| \le O\left(\sqrt{{1\over M} \log{1\over \delta}}\right).
\end{align*}
Then, for any $0 < \delta <1$, with probability at least $1-\delta$ over the randomness of evaluation data, we have
\begin{align*}
    & g^*_2(\X_\tau, \y_\tau, \x_{\tau,q})\\
    = & \sum_{i \in [m]} \a^*_i\act\left[ {\y_\tau^\top \Phi^\tau_{:,\t_\tau} \over M} \V^{*,(i)}_{\t_\tau,:} \phi_{\tau, q, \t_\tau} + \Xi^\top \diag(\V^{*,(i)}) - \Xi_{\t_\tau}^\top \diag(\V^{*,(i)}_{\t_\tau,\t_\tau})  \right]\\
    = & \sum_{i \in [m]} \a^*_i\act\left[ \z_{\tau}^\top \V^{*,(i)}_{\t_\tau,:} \phi_{\tau, q, \t_\tau} + \Xi^\top \diag(\V^{*,(i)}) - \Xi_{\t_\tau}^\top \diag(\V^{*,(i)}_{\t_\tau,\t_\tau})  \right]\\
    = & \sum_{i \in [m]} \a^*_i\act\left[ 2 \gamma \diag\left(\V^{*,(i)}_{\t_\tau,\t_\tau} \right)^\top \phi_{\tau, q, \t_\tau} + \Xi^\top \diag(\V^{*,(i)}) - \Xi_{\t_\tau}^\top \diag(\V^{*,(i)}_{\t_\tau,\t_\tau})  \right]
    \\
    = & \sum_{i \in [m]} \a^*_i\act\left[ \diag\left(\V^{*,(i)} \right)^\top \left(2 \gamma  \hat{\phi}_{\tau, q}  + \Xi\right)  - \Xi_{\t_\tau}^\top \diag(\V^{*,(i)}_{\t_\tau,\t_\tau})  \right]\\
    = & \sum_{i \in [m]} \a^*_i\act\left[ \diag\left(\V^{*,(i)} \right)^\top \left(2 \gamma  \hat{\phi}_{\tau, q}  + \Xi\right)  + O\left(\sqrt{{1\over M} \log{1\over \delta}}\right)  \right]\\
    = & \sum_{i \in [m]} \a^*_i\act\left[ \diag\left(\V^{*,(i)} \right)^\top \left(2 \gamma  \hat{\phi}_{\tau, q}  + P_{\D_2}(\Xi)\right)  + O\left(\sqrt{{1\over M} \log{1\over \delta}}\right)  \right] \\
    = & ~ h(\theta_2, 2 \gamma  \hat{\phi}_{\tau, q}  + P_{\D_2}(\Xi)) + O\left(\sqrt{{\nu_2\over M} \log{1\over \delta}}\right). 
\end{align*} 
Similarly, we have $g^*_1(\X_\tau, \y_\tau, \x_{\tau,q}) = h(\theta_1, 2 \gamma  \hat{\phi}_{\tau, q}  + P_{\D_1}(\Xi)) + O\left(\sqrt{{\nu_1\over M} \log{1\over \delta}}\right)$. 

As $\t_\tau \in S_1$ and the number of $(\phi_{i_\tau},\phi_{j_\tau})$ being balanced as training, by careful checking,  we can see that $\ell(y_q \cdot h(\theta_1, 2 \gamma  \hat{\phi}_{\tau, q} )) = \ell(y_q \cdot h(\theta_2, 2 \gamma  \hat{\phi}_{\tau, q} ))=0$ and we have $2 \gamma  \hat{\phi}_{\tau, q}$ is the signal part.

On the other hand, we know that all the first half columns in $\D_2$ are orthogonal with each other, and the second half columns in $\D_2$ are opposite to the first half columns. We have the same fact to $\D_1$. As $\Xi$ is a symmetric noise distribution, we have ${\E[\|P_{\D_1}(\Xi))\|_2^2] \over \E[\|P_{\D_2}(\Xi))\|_2^2]} = {\nu_1+1 \over \nu_2+1}$ and we have $P_{\D_1}(\Xi))$ and $P_{\D_2}(\Xi))$ is the noise part.
\end{proof}

\subsection{Auxiliary Lemma}

\begin{lemma}[Chernoff bound for binomial distribution]\label{lem:Chernoff}
Let $Z \sim \Bin(n, p)$ and let $\mu = \E[Z]$. For any $0 < \epsilon < 1$,
we have
\begin{align*}
    \Pr(|Z-\mu| \ge \epsilon\mu) \le 2 \exp\left(- {\epsilon^2 \mu \over 3}\right).
\end{align*}
\end{lemma}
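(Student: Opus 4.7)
The plan is to prove this standard multiplicative Chernoff bound via the exponential-moment (Chernoff) method, applied separately to the upper and lower tails and then combined by a union bound. Writing $Z = \sum_{i=1}^n X_i$ with $X_i$ i.i.d.\ Bernoulli$(p)$, independence gives the moment generating function $\E[e^{tZ}] = (1 - p + p e^t)^n$ for every $t \in \R$, which is the only structural fact about $\Bin(n,p)$ that will be used.

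For the upper tail, I would apply Markov's inequality to $e^{tZ}$ with $t > 0$,
$$\Pr(Z \ge (1+\epsilon)\mu) \le e^{-t(1+\epsilon)\mu}\, \E[e^{tZ}],$$
then use $1+x \le e^x$ to bound $(1 + p(e^t-1))^n \le \exp(np(e^t-1)) = \exp(\mu(e^t-1))$. The choice $t = \ln(1+\epsilon)$ minimizes the resulting expression and yields
$$\Pr(Z \ge (1+\epsilon)\mu) \le \exp\!\bigl(\mu\bigl(\epsilon - (1+\epsilon)\ln(1+\epsilon)\bigr)\bigr).$$
A symmetric computation applied to $-Z$ with $t = -\ln(1-\epsilon) > 0$ produces
$$\Pr(Z \le (1-\epsilon)\mu) \le \exp\!\bigl(\mu\bigl(-\epsilon - (1-\epsilon)\ln(1-\epsilon)\bigr)\bigr).$$

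The remaining analytic task is the two elementary inequalities
$$\epsilon - (1+\epsilon)\ln(1+\epsilon) \le -\tfrac{\epsilon^2}{3}, \qquad -\epsilon - (1-\epsilon)\ln(1-\epsilon) \le -\tfrac{\epsilon^2}{2},$$
valid for $0 < \epsilon < 1$. The lower-tail one is easy: setting $\psi(\epsilon)$ to the difference, one checks $\psi(0)=0$ and $\psi'(\epsilon) = \ln(1-\epsilon)+\epsilon \le 0$ (immediate from $\ln(1+x) \le x$). With both tail bounds in place, a union bound and the trivial comparison $-\epsilon^2/2 \le -\epsilon^2/3$ give $\Pr(|Z-\mu|\ge \epsilon\mu) \le 2\exp(-\epsilon^2\mu/3)$, which is the claimed inequality.

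The main obstacle is the upper-tail elementary inequality $\epsilon - (1+\epsilon)\ln(1+\epsilon) \le -\epsilon^2/3$: the constant $1/3$ is tight in a Taylor-series sense, so it cannot be obtained from a single application of $\ln(1+x) \le x$. The clean route is to define $\phi(\epsilon) = -\epsilon + (1+\epsilon)\ln(1+\epsilon) - \epsilon^2/3$, compute $\phi'(\epsilon) = \ln(1+\epsilon) - 2\epsilon/3$ and $\phi''(\epsilon) = 1/(1+\epsilon) - 2/3$, note $\phi(0)=\phi'(0)=0$, and argue that $\phi'$ is positive on $(0,1)$ by showing $\phi'$ is unimodal with $\phi'(1) = \ln 2 - 2/3 > 0$; hence $\phi \ge 0$ on $(0,1)$, which is exactly the needed bound. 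An alternative is a term-by-term comparison of the Taylor expansions of $(1+\epsilon)\ln(1+\epsilon)$ and $\epsilon + \epsilon^2/3$, exploiting the alternating-series structure of $\ln(1+\epsilon)$.
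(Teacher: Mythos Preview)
Your proposal is correct and complete: the MGF computation, the two tail bounds, and the elementary inequalities are all standard and your sketched verifications (in particular the unimodality-of-$\phi'$ argument for the tight upper-tail constant $1/3$) go through as written.

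The paper, however, does not prove this lemma at all---it is stated in the appendix as an auxiliary result and simply invoked, treating the multiplicative Chernoff bound as a known fact. So there is no ``paper's approach'' to compare against; your write-up supplies a proof where the paper gives none.
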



\end{document}